\def\eqref#1{equation~\ref{#1}}
\def\1{\bm{1}}
\DeclareMathAlphabet{\mathsfit}{\encodingdefault}{\sfdefault}{m}{sl}
\SetMathAlphabet{\mathsfit}{bold}{\encodingdefault}{\sfdefault}{bx}{n}
\newcommand{\Var}{\mathrm{Var}}
\newcommand{\Cov}{\mathrm{Cov}}
\newtheorem{theorem}{Theorem}
\newtheorem{corollary}{Corollary}
\newtheorem{proposition}{Proposition}
\DeclareMathOperator{\MSE}{MSE}
\DeclareMathOperator{\tr}{tr}
\title{From Noisy Traces to Stable Gradients: Bias--Variance Optimized Preference Optimization for Aligning Large Reasoning Models}
\author{
Mingkang Zhu$^{1}$, Xi Chen$^{2}$, Bei Yu$^{1}$, Hengshuang Zhao$^{2}$, Jiaya Jia$^{3}$\\
$^{1}$The Chinese University of Hong Kong, $^{2}$The University of Hong Kong, \\
$^{3}$The Hong Kong University of Science and Technology \\
\texttt{\{mkzhu23@cse.cuhk.edu.hk, jia@cse.ust.hk\}} 
}
\begin{document}

\maketitle

\begin{abstract}
Large reasoning models (LRMs) generate intermediate reasoning traces before producing final answers, yielding strong gains on multi-step and mathematical tasks. Yet aligning LRMs with human preferences, a crucial prerequisite for model deployment, remains underexplored. The statistically correct objective for preference alignment requires marginalizing over reasoning traces, but this computation is intractable in practice. A common workaround optimizes a single sampled trajectory, which introduces substantial gradient variance from stochastic trace sampling. To address this challenge, we frame preference optimization for LRMs through the lens of the bias--variance trade-off and propose Bias--Variance Optimized Preference Optimization (BVPO), a simple, drop-in method that mixes two gradient estimators: a high-variance trace-based estimator and a low-variance empty-trace estimator obtained by disabling reasoning trace generation. Our theory shows that BVPO strictly reduces trace-induced variance for any nontrivial mixture, provides a closed-form choice of the mixing weight that minimizes mean-squared error relative to the true marginal gradient, and under standard smoothness and step-size conditions, tightens classical convergence bounds for stochastic gradient descent. Empirically, BVPO improves alignment over the best baseline by up to 7.8 points on AlpacaEval~2 and 6.8 points on Arena-Hard. Despite being trained only on general conversational data, BVPO also boosts reasoning performance for base models by up to 4.0 points on the average of six math reasoning benchmarks. These results identify variance from trace sampling as a key bottleneck and demonstrate that directly optimizing the bias--variance trade-off yields more stable training and stronger overall performance.
\end{abstract}

\section{Introduction}
\label{sec:intro}
Large reasoning models (LRMs), such as DeepSeek R1, Gemini~2.5, and GPT-o1, scale test-time compute by generating intermediate reasoning traces before producing a final answer \citep{snell2024scalingllmtesttimecompute,deepseekr1,gemini25,gpto1}. This explicit deliberation drives large gains on multi-step and mathematically intensive tasks, and reinforcement learning with verifiable rewards further improves such capability \citep{grpo,deepseekr1,rloo,zeng2025simplerlzoo}. While alignment with human preference is a prerequisite for deployment, the alignment of LRMs remains largely unexplored. To the best of our knowledge, there is no systematic treatment of aligning LRMs with human preferences; public discussion is sparse and limited to brief remarks in technical reports accompanying foundational LRMs \citep{deepseekr1,gpto1}. Existing alignment pipelines---from RLHF \citep{NEURIPS2022_b1efde53,ziegler2020finetuninglanguagemodelshuman,schulman2017proximalpolicyoptimizationalgorithms} to direct preference optimization (DPO) \citep{NEURIPS2023_a85b405e} and its variants \citep{park-etal-2024-disentangling,meng2024simpo,ethayarajh2024ktomodelalignmentprospect,zhu2025tgdpo}---were developed for conventional LLMs that do not externalize lengthy reasoning traces. When applied naively to LRMs, these methods inherit a unique source of instability: \emph{trace-induced gradient variance}.

To explain this, we study preference optimization for LRMs under the trace--answer factorization $\pi_\theta(r,y\mid x)=\pi_\theta(r\mid x)\,\pi_\theta(y\mid x,r)$,  where the model first generates a reasoning trace $r$ and then produces the final answer $y$. The statistically correct preference optimization objective compares \emph{marginal} answer probabilities $\pi_\theta(y\mid x)=\sum_r\pi_\theta(r,y\mid x)$,  so that all possible traces leading to the same answer are included. However, this sum spans an exponentially large set of traces, making it computationally infeasible.  In practice, it is typically replaced with a single sampled trace, yielding a trace-based preference loss and its gradient, the  \emph{trace-based gradient} $g_t$ \citep{deepseekr1}. This estimator is easy to compute but highly noisy: long and variable traces produce large fluctuations in joint log-probabilities, which hinder stable optimization.

We propose \textbf{Bias--Variance Optimized Preference Optimization (BVPO)} to address the high variance inherent in this trace-based training. BVPO augments the standard trace-based gradient $g_t$ with an empty-trace gradient $g_e$, computed by conditioning the policy on an empty trace. $g_e$ is deterministic with respect to trace sampling and hence has low variance relative to the ideal marginal gradient. BVPO then forms a convex combination, $g_c(\alpha) = \alpha g_t + (1-\alpha) g_e$, designed to be optimal with respect to the Mean Squared Error (MSE) with respect to the ideal marginal gradient $g_m$. Crucially, MSE can be decomposed into squared bias and variance, providing a principled metric for strategically balancing the high-variance~$g_t$ with the low-variance~$g_e$. Our analysis guarantees this combined estimator~$g_c$ has a lower variance and a strictly better MSE than either component alone for any nontrivial mixture. The resulting MSE reduction directly tightens the SGD convergence bound, providing a principled link between statistical optimality and improved training stability.

Extensive experiments on AlpacaEval 2~\citep{AlpacaEval} and Arena-Hard~\citep{arenahard2024} show that BVPO consistently outperforms the best baseline, with gains of up to 7.8 points on AlpacaEval 2 and 6.8 points on Arena-Hard. Because alignment with human preference is typically the final stage before deployment, we also examine whether LRMs’ reasoning ability is preserved after alignment. Despite being trained exclusively on general conversational data, BVPO does not degrade, and in fact improves reasoning, raising the base model’s average performance across six math reasoning benchmarks by up to 4.0 points, including AIME24/25~\citep{aimeamc}, AMC~\citep{aimeamc}, OlympiadBench~\citep{dataset_olympiad}, Minerva~\citep{dataset_minerva}, and MATH-500~\citep{dataset_math}.  These results indicate that BVPO not only stabilizes the alignment process but also enhances reasoning capabilities. Our key contributions are summarized as follows:

\begin{itemize}
\item We identify high gradient variance in aligning LRMs due to stochastic reasoning trace sampling, and propose BVPO, which linearly combines trace-based and low-variance empty-trace gradient estimators, explicitly optimizing the bias--variance trade-off via MSE.

\item  We prove that BVPO's combined gradient estimator reduces conditional variance induced by trace sampling, derive an MSE-optimal mixing coefficient with domination guarantees, and connect these results to tighter SGD convergence bounds.

\item Extensive experiments demonstrate that BVPO achieves gains over the best baseline by up to 6.8 points on Arena-Hard and 7.8 points on AlpacaEval 2. Although trained exclusively on general conversational data, BVPO nevertheless substantially improves the average performance of the base model on six math reasoning benchmarks by up to 4.0 points.
\end{itemize}

\section{Related Work}
\label{sec:related}
\noindent \textbf{Large Reasoning Models.} Large reasoning models (LRMs) such as DeepSeek R1 \citep{deepseekr1}, Gemini 2.5 \citep{gemini25}, and GPT-o1 \citep{gpto1} mark a new frontier in LLM development. Unlike conventional LLMs, LRMs leverage \emph{test-time scaling} \citep{snell2024scalingllmtesttimecompute}, generating explicit reasoning traces before producing final answers. This mechanism substantially improves performance on complex, multi-step problems \citep{deepseekr1,grpo}. Recent efforts further enhance LRMs’ reasoning ability through reinforcement learning with verifiable rewards, especially on mathematically intensive tasks \citep{grpo,deepseekr1,rloo,zeng2025simplerlzoo}. In contrast, to the best of our knowledge, there is no systematic study of aligning LRMs with human preferences, a prerequisite for real-world deployment. Existing discussions are sparse and confined to brief subsections in technical reports of foundation LRMs \citep{deepseekr1}. Our work fills this gap by systematically analyzing the alignment challenges unique to LRMs—most notably the high variance induced by long, stochastic reasoning traces, and introducing a principled algorithm to address them.

\noindent \textbf{Reinforcement Learning from Human Feedback.} Reinforcement Learning from Human Feedback (RLHF) is a foundational approach for aligning large language models (LLMs) with human preferences \citep{NEURIPS2022_b1efde53, ziegler2020finetuninglanguagemodelshuman, schulman2017proximalpolicyoptimizationalgorithms}. Recent efforts focus on bypassing explicit reward model training. A prominent approach is Direct Preference Optimization (DPO) \citep{NEURIPS2023_a85b405e}, which formulates an explicit loss corresponding to the PPO-induced reward. This enables direct fine-tuning without training a reward model. DPO has demonstrated stability and efficiency across diverse applications \citep{ivison2024unpacking, tian2023finetuninglanguagemodelsfactuality, miao2024aligningcodellmsdirectpreference}. Several extensions refine this framework further: R-DPO \citep{park-etal-2024-disentangling} mitigates sensitivity to sequence length, SimPO \citep{meng2024simpo} better aligns the objective with the sampling distribution and eliminates the reference model, KTO \citep{ethayarajh2024ktomodelalignmentprospect} generalizes preference optimization beyond pairwise comparisons, and TGDPO \citep{zhu2025tgdpo} incorporates token-level reward guidance. However, these methods are developed for conventional LLMs that directly produce final answers. When naively applied to LRMs, which externalize lengthy reasoning traces that reflect the model’s internal deliberation and trial-and-error, they face a unique challenge: high gradient variance originating from stochastic trace sampling and large fluctuations in joint log-probabilities. To address this, we propose BVPO, a principled preference optimization method that optimizes bias–variance trade-off, yielding significantly stronger alignment while preserving and even enhancing reasoning performance in math reasoning tasks.

\section{Preference Optimization for LRMs}
\label{sec:losses}

This section formalizes the problem of aligning Large Reasoning Models with human preferences using preference optimization. We first review the standard DPO objective, highlighting its limitations when applied to LRMs, and then introduce our proposed method.

\subsection{Preliminaries}
\label{sec:prelim}

\paragraph{Large Reasoning Models.} An LRM is modeled as a policy $\pi_\theta$ parameterized by $\theta$. Given a prompt $x$, the model first generates an intermediate reasoning trace $r$ and then produces a final answer $y$. This sequential process defines a probability distribution over the complete trajectory $(r, y)$, which factorizes as:
$ \pi_\theta(r,y \mid x) = \pi_\theta(r \mid x)\,\pi_\theta(y \mid x,r).$
The marginal probability of the final answer $y$ is obtained by summing over all possible reasoning traces: $\pi_\theta(y \mid x) = \sum_{r} \pi_\theta(r,y \mid x)$.

\paragraph{Direct Preference Optimization.} 
DPO \citep{NEURIPS2023_a85b405e} aligns language models with human preferences by bypassing the explicit reward-modeling stage of traditional RLHF. The key insight is to analytically derive a loss from the Bradley-Terry preference model \citep{bt52}, which defines the probability that a response $y^+$ is preferred over $y^-$ as:
$$
p(y^+ \succ y^- \mid x) = \sigma \left( r(x, y^+) - r(x, y^-) \right),
$$
where $\sigma(\cdot)$ is the sigmoid function and $r(x,y)$ is a latent reward function. DPO defines this reward in terms of the model policy $\pi_\theta$ and a fixed reference policy $\pi_{\text{ref}}$:
$$
r(x, y) = \beta \log \frac{\pi_{\theta}(y \mid x)}{\pi_{\text{ref}}(y \mid x)}.
$$
Here, $\beta$ is a temperature parameter that scales the reward difference. Substituting this reward definition into the preference model and maximizing the log-likelihood for a dataset $\mathcal{D}$ of preference tuples $(x, y^+, y^-)$ yields the DPO loss:
$$
\mathcal{L}_{\text{DPO}}(\pi_{\theta}) = - \mathbb{E}_{(x, y^+, y^-) \sim \mathcal{D}} \left[ \log \sigma \left( \beta \log \frac{\pi_{\theta}(y^+ \mid x)}{\pi_{\text{ref}}(y^+ \mid x)} - \beta \log \frac{\pi_{\theta}(y^- \mid x)}{\pi_{\text{ref}}(y^- \mid x)} \right) \right].
$$

\subsection{DPO for LRMs: Ideal vs. Practical Objectives}
\label{sec:problem_formulation}

Applying the standard DPO framework to LRMs requires adapting its objective to account for reasoning traces. This section formalizes this challenge by contrasting the theoretically ideal objective with its standard, practical approximation.

\paragraph{Ideal Objective: The Marginal Preference Loss $\mathcal{L}_m$.}
The ideal objective for aligning an LRM applies the DPO loss to the marginal probabilities of the final answers. This \textbf{marginal preference loss} compares the log-probability ratios of the preferred output $y^+$ and dispreferred output $y^-$ relative to a reference policy $\pi_{\text{ref}}$:
$$
\mathcal{L}_m(\theta) = - \mathbb{E}_{(x, y^+, y^-) \sim \mathcal{D}} \left[ \log\sigma\left( \beta \log\frac{\pi_\theta (y^+ \mid x)}{\pi_{\text{ref}}(y^+ \mid x)} - \beta \log\frac{\pi_\theta (y^- \mid x)}{\pi_{\text{ref}}(y^- \mid x)} \right) \right],
$$
where the marginal probability is $\pi_\theta (y \mid x) = \sum_r \pi_\theta (r, y \mid x)$. This loss is statistically optimal as it directly models the true preference over final answers. However, computing the marginal probability requires summing over an exponentially large space of possible reasoning traces, rendering this loss computationally intractable.

\paragraph{Practical Proxy: The Trace-Based Loss $\mathcal{L}_t$.}

The standard approach to create a tractable approximation of 
marginal probabilities for LRMs is to use a single-sample Monte Carlo estimate based on sampled trajectories \citep{deepseekr1}. In the case of $\mathcal{L}_m(\theta)$, this yields the \textbf{trace-based DPO loss},  which compares the joint probabilities of trace–answer pairs $(r^+, y^+)$ and $(r^-, y^-)$:
\begin{equation}
\label{L_t}
\mathcal{L}_t(\theta) = \mathbb{E}_{(x, y^\pm, r^\pm) \sim \mathcal{D}_t} \left[ \ell_t(\theta; x, y^\pm, r^\pm) \right],
\end{equation}
where $\ell_t$ represents the loss associated with a single pair of samples:
$$
\ell_t(\theta; x, y^\pm, r^\pm) = - \log\sigma\left( \beta \log\frac{\pi_\theta (r^+, y^+ \mid x)}{\pi_{\text{ref}}(r^+, y^+ \mid x)} - \beta \log\frac{\pi_\theta (r^-, y^- \mid x)}{\pi_{\text{ref}}(r^-, y^- \mid x)} \right).
$$
The trace-based gradient, $g_t = \nabla_\theta \ell_t(\theta; x, r^\pm, y^\pm)$, provides a direct optimization signal by operating on full trajectories. While conceptually straightforward, its practical application is challenged by the significant variance of the gradient estimator, which can hinder stable training. This variance is a direct consequence of sampling the reasoning traces $r$. These traces are often long, vary widely in length, and are drawn from a vast search space, causing the joint log-probabilities $\log\pi_\theta(r, y \mid x)$ to fluctuate dramatically across samples and yield a noisy gradient. We further provide empirical evidence in Appendix~\ref{sec:logp_stats_appendix} that the variance of the log-probabilities and response length with trace generation is much higher than disabling trace generation. This provides concrete evidence that the instability of the trace-based gradient is a significant bottleneck in practice.

\subsection{ Bias--Variance Optimized Preference Optimization}
\label{sec:our-method}

The standard trace-based loss $\mathcal{L}_t$ poses a significant challenge to stable alignment because of the high variance of the gradient estimator. To address this issue, we propose \textbf{ Bias--Variance Optimized Preference Optimization (BVPO)}, an algorithm that creates a more stable training objective by directly managing the bias–variance trade-off. BVPO achieves this by combining the signal from the high-variance $\mathcal{L}_t$ with a novel, low-variance component.

\paragraph{Empty-Trace Loss $\mathcal{L}_e$.}  To directly combat the source of the variance, we introduce the \textbf{empty-trace loss}, $\mathcal{L}_e$. This objective bypasses the stochasticity of trace sampling by conditioning the policy on a fixed, empty trace $r = \emptyset$ and applying the DPO objective directly to the final answers. The full loss is the expectation of single-sample losses, $\ell_e$, over the dataset $\mathcal{D}_e$:
$$
\mathcal{L}_e(\theta) = \mathbb{E}_{(x, y'^\pm) \sim \mathcal{D}_e} \left[ \ell_e(\theta; x, y'^\pm) \right],
$$
where $\ell_e$ is defined for a single preference pair as:
$$
\ell_e(\theta; x, y'^\pm) = - \log \sigma \left( \beta \log \frac{\pi_{\theta}(r=\emptyset, y'^+ | x)}{\pi_{\text{ref}}(r=\emptyset, y'^+ | x)} - \beta \log \frac{\pi_{\theta}(r=\emptyset, y'^- | x)}{\pi_{\text{ref}}(r=\emptyset, y'^- | x)} \right).
$$
The gradient of this single-sample loss, $g_e = \nabla_\theta \ell_e$, exhibits lower variance because it avoids sampling from the vast space of reasoning traces. The trade-off is a potentially higher bias, as it ignores the reasoning process.

\paragraph{Combined BVPO Loss $\mathcal{L}_c$.} To exploit  the accuracy of the trace-based estimator while mitigating its variance with the  stability of the empty-trace estimator, we define the \textbf{combined BVPO loss} as their convex combination:
\begin{equation}
\label{L_c}    
\mathcal{L}_c(\theta) = \alpha \mathcal{L}_t(\theta) + (1-\alpha) \mathcal{L}_e(\theta),
\end{equation}
where $\alpha \in [0,1]$ is a hyperparameter controlling the interpolation. The resulting gradient estimator $g_c$ is a weighted average of the individual estimators:
\[
g_c = \alpha g_t + (1-\alpha) g_e.
\]
This formulation provides principled control over the bias–variance trade-off. By tuning $\alpha$, one can obtain a combined estimator $g_c$ that improves upon both $g_t$ and $g_e$. In Section~\ref{sec:analysis}, we formally prove its variance-reduction property and show that $g_c$ achieves a more favorable bias–variance balance than either component alone.

\paragraph{Practical Implementation of BVPO.}  
Given a prompt dataset $\mathcal{D}=\{x_i\}_{i=1}^N$, we construct the preference  dataset for the \textbf{Trace-Based Loss} by sampling from $\pi_{\text{ref}}$, yielding $\mathcal{D}_t=\{(x_i, r_i^{\pm}, y_i^{\pm})\}_{i=1}^N$.  
For the \textbf{Empty-Trace Loss}, we disable reasoning trace generation by appending ``\texttt{<think>}\texttt{</think>}'' to each input prompt $x_i$, producing the preference dataset $\mathcal{D}_e=\{(x_i, {y_i}'^{\pm})\}_{i=1}^N$.  
Preference comparisons are made solely on the final responses $y$, since reasoning traces are often long, noisy, and include trial-and-error steps. This mirrors prior practice in \citet{deepseekr1}, where PPO was applied with rewards based only on $y$.  Our mixed-gradient estimator $g_c$ is agnostic to the preference optimization algorithm. In practice, we instantiate it with the widely used DPO objective, yielding the combined  BVPO loss in \Cref{L_c}.

\section{Theoretical Analysis}
\label{sec:analysis}

We now ask: does the mixed estimator $g_c$ provably improve over its components $g_t$ and $g_e$? We show that it achieves variance reduction w.r.t. trace sampling (\cref{thm:variance_reduction}), optimal MSE guarantees (\cref{thm:optimal_combination}), and these statistical gains yield stronger convergence for SGD (\Cref{thm:biased_sgd_convergence_final,thm:optimal_sgd_convergence_revised}).

\subsection{Reduction of Conditional Variance Induced by Trace Sampling}

The high variance of the trace-based estimator $g_t$ often impedes stable optimization. To mitigate this, our combined estimator $g_c$ incorporates the low-variance empty-trace estimator $g_e$, reducing variance while retaining the directional information of $g_t$, as shown below.

\begin{theorem}[Conditional Variance Reduction for Trace Sampling]
\label{thm:variance_reduction}
The trace-based estimator $g_t$ is a random variable dependent on a sampled trace $r^\pm$, while the empty-trace estimator $g_e$ is deterministic with respect to trace sampling. For a vector-valued gradient $g$, its scalar variance is defined as the trace of its covariance matrix, $\mathrm{Var}(g) := \mathrm{tr}(\mathrm{Cov}(g)) = \mathbb{E}[\|g - \mathbb{E}[g]\|_2^2]$.

The combined estimator $g_c = \alpha g_t + (1 - \alpha) g_e$, with a fixed mixing coefficient $\alpha \in [0,1]$, has a conditional variance (with respect to trace sampling) that is bounded above by that of $g_t$. Specifically, for any data sample $(x, y^\pm, y'^\pm)$:
$$
\mathrm{Var}_{r^\pm}(g_c \mid x, y^\pm, y'^\pm) = \alpha^2\mathrm{Var}_{r^\pm}(g_t \mid x, y^\pm) \le \mathrm{Var}_{r^\pm}(g_t \mid x, y^\pm).
$$
Consequently, the expected conditional variance is also bounded:
$$
\mathbb{E}_{x, y^\pm, y'^\pm}[\mathrm{Var}_{r^\pm}(g_c \mid x, y^\pm, y'^\pm)] \le \mathbb{E}_{x, y^\pm}[\mathrm{Var}_{r^\pm}(g_t \mid x, y^\pm)].
$$
\end{theorem}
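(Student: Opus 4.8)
The plan is to exploit the fact that, conditionally on the data sample $(x, y^\pm, y'^\pm)$, the empty-trace gradient $g_e$ is a constant, so the only randomness in $g_c = \alpha g_t + (1-\alpha) g_e$ comes from $g_t$ through the sampled trace $r^\pm$. Since adding a constant does not change variance, and scaling by $\alpha$ scales the covariance matrix by $\alpha^2$, we get $\mathrm{Cov}_{r^\pm}(g_c \mid \cdot) = \alpha^2 \mathrm{Cov}_{r^\pm}(g_t \mid \cdot)$; taking traces gives $\mathrm{Var}_{r^\pm}(g_c \mid \cdot) = \alpha^2 \mathrm{Var}_{r^\pm}(g_t \mid \cdot)$. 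Because $g_e$ does not depend on $y'^\pm$ in a way that couples to the trace randomness, the conditional variance of $g_t$ actually only depends on $(x, y^\pm)$, which is why the right-hand side is written as $\mathrm{Var}_{r^\pm}(g_t \mid x, y^\pm)$.

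Concretely, I would proceed in the following steps. First, fix the conditioning variables $(x, y^\pm, y'^\pm)$ and write $c := g_e(\theta; x, y'^\pm)$, noting that $c$ is a deterministic vector under the conditioning. Second, compute
\[
\mathbb{E}_{r^\pm}[g_c \mid x, y^\pm, y'^\pm] = \alpha\, \mathbb{E}_{r^\pm}[g_t \mid x, y^\pm] + (1-\alpha) c,
\]
so that the centered quantity is $g_c - \mathbb{E}_{r^\pm}[g_c \mid \cdot] = \alpha\bigl(g_t - \mathbb{E}_{r^\pm}[g_t \mid \cdot]\bigr)$; the constant $c$ cancels. Third, take squared Euclidean norm and expectation over $r^\pm$:
\[
\mathrm{Var}_{r^\pm}(g_c \mid \cdot) = \mathbb{E}_{r^\pm}\!\left[\bigl\|\alpha(g_t - \mathbb{E}_{r^\pm}[g_t])\bigr\|_2^2\right] = \alpha^2\, \mathrm{Var}_{r^\pm}(g_t \mid x, y^\pm).
\]
Fourth, since $\alpha \in [0,1]$ implies $\alpha^2 \le 1$ and the variance is nonnegative, conclude $\alpha^2\,\mathrm{Var}_{r^\pm}(g_t \mid x, y^\pm) \le \mathrm{Var}_{r^\pm}(g_t \mid x, y^\pm)$, which is the first displayed inequality. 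Fifth, take expectation over the data sample $(x, y^\pm, y'^\pm)$ on both sides; monotonicity of expectation preserves the inequality, and since the bound no longer depends on $y'^\pm$ we may marginalize it out to obtain $\mathbb{E}_{x, y^\pm, y'^\pm}[\mathrm{Var}_{r^\pm}(g_c \mid \cdot)] \le \mathbb{E}_{x, y^\pm}[\mathrm{Var}_{r^\pm}(g_t \mid x, y^\pm)]$, the second displayed inequality.

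I do not expect any serious obstacle here; the statement is essentially the elementary identity $\mathrm{Var}(aZ + b) = a^2 \mathrm{Var}(Z)$ for a scalar $a$, a constant $b$, and a random vector $Z$, combined with the law of total expectation. The only points requiring mild care are bookkeeping ones: making explicit that $g_e$ is $\sigma(x, y'^\pm)$-measurable and hence constant under the stated conditioning (so it genuinely contributes zero variance and drops out after centering), and checking that $g_t$'s conditional law depends only on $(x, y^\pm)$ rather than the full tuple, which justifies rewriting the conditioning set on the right-hand side. Both are immediate from the definitions of $\ell_t$ and $\ell_e$ given earlier in the excerpt, since $\ell_t$ involves only $(x, r^\pm, y^\pm)$ and $\ell_e$ involves only $(x, y'^\pm)$ with the trace fixed to $\emptyset$.
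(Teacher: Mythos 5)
Your proposal is correct and follows essentially the same argument as the paper's proof: both reduce to the identity $\mathrm{Var}(\alpha Z + b) = \alpha^2\mathrm{Var}(Z)$ for a constant $b$ (the paper expands the variance of the sum and notes the $g_e$ variance and cross-covariance terms vanish, while you cancel the constant directly after centering, which is the same computation). The subsequent steps, $\alpha^2 \le 1$ and taking expectation over the data distribution, match the paper exactly.
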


The proof of Theorem \ref{thm:variance_reduction} is given in Appendix \ref{proof:variance_reduction}.  In this theorem, the first inequality is
strict whenever $\alpha\in(0,1)$ and $\mathrm{Var}_{r^\pm}(g_t\mid x,y^\pm)>0$.

Theorem~\ref{thm:variance_reduction} formalizes a key benefit of our approach: incorporating the gradient estimator $g_e$ guarantees to reduce the variance stemming from trace sampling. The degree of this reduction is controlled by $\alpha$. However, this benefit comes with a trade-off. While a smaller $\alpha$ suppresses variance, it may increase the bias with respect to the true marginal gradient by shifting the estimator’s mean. This introduces the classic bias-variance trade-off, which we analyze in the next section.

\subsection{Optimal Combination of Gradient Estimators by MSE Minimization}
\label{sec:mse}

To determine the best balance between bias and variance, we seek the value of $\alpha$ that minimizes the \textbf{mean squared error (MSE)} of $g_c$ with respect to the true marginal gradient, $\mu = \nabla_\theta \mathcal{L}_m(\theta)$. The MSE provides a comprehensive measure of estimator quality, as it simultaneously penalizes both variance and systematic deviation from the target gradient.

\begin{theorem}[Optimal Convex Combination of Gradient Estimators]
\label{thm:optimal_combination}
For two estimators  $g_t$ and $g_e$  of the true marginal gradient $\mu := \nabla_\theta \mathcal{L}_m(\theta)$, assume the estimators have finite first and second moments, with bias vectors $b_t := \mathbb{E}[g_t] - \mu$, $b_e := \mathbb{E}[g_e] - \mu$, and covariance matrices $\Sigma_t = \mathrm{Cov}(g_t)$, $\Sigma_e = \mathrm{Cov}(g_e)$, and 
$$\Sigma_{te} = \Cov(g_t, g_e)= \mathbb{E}[(g_t- \mathbb{E}[g_t]) (g_e- \mathbb{E}[g_e])^\top]. $$ 
The combined estimator $g_c(\alpha) = \alpha g_t + (1-\alpha)g_e$ for $\alpha \in [0,1]$ has an MSE
defined by 
$$\operatorname{MSE}(g_c(\alpha)) := \mathbb{E}[\|g_c(\alpha) - \mu\|^2].$$ 

If $\mathbb{E}[\|g_t-g_e\|^2] >0$, then the unconstrained value of $\alpha$ that minimizes this MSE is:
\[
\alpha_{\mathrm{unc}} = \frac{ \tr(\Sigma_e - \Sigma_{te}) + \|b_e\|^2 - b_t^\top b_e }{ \mathbb{E}[\|g_t-g_e\|^2] },
\]
and the optimal parameter within the valid interval is $\alpha^\star = \max(0, \min(1, \alpha_{\mathrm{unc}}))$. If $\mathbb{E}[\|g_t-g_e\|^2] = 0$, then any $\alpha^\star\in[0,1]$ is optimal. 

This optimal estimator is guaranteed to be no worse than the better of the two individual estimators:
\[
\operatorname{MSE}(g_c(\alpha^\star)) \le \min\bigl\{ \operatorname{MSE}(g_t), \operatorname{MSE}(g_e) \bigr\}.
\]
\end{theorem}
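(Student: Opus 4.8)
The plan is to treat $\operatorname{MSE}(g_c(\alpha))$ as a scalar quadratic in $\alpha$ and minimize it explicitly, then check that the unconstrained minimizer's value at $\alpha^\star$ beats both endpoints. First I would use the standard bias--variance decomposition $\operatorname{MSE}(g_c(\alpha)) = \|\mathbb{E}[g_c(\alpha)] - \mu\|^2 + \operatorname{tr}(\operatorname{Cov}(g_c(\alpha)))$. Writing $g_c(\alpha) = g_e + \alpha(g_t - g_e)$ is the key reparametrization: then $\mathbb{E}[g_c(\alpha)] - \mu = b_e + \alpha(b_t - b_e)$, so the squared-bias term is $\|b_e\|^2 + 2\alpha\, b_e^\top(b_t - b_e) + \alpha^2\|b_t - b_e\|^2$, and the variance term is $\operatorname{tr}(\operatorname{Cov}(g_e)) + 2\alpha\operatorname{tr}(\operatorname{Cov}(g_e, g_t - g_e)) + \alpha^2 \operatorname{tr}(\operatorname{Cov}(g_t - g_e))$. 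Adding these, the coefficient of $\alpha^2$ is $\|b_t - b_e\|^2 + \operatorname{tr}(\operatorname{Cov}(g_t - g_e))$, which is exactly $\mathbb{E}[\|(g_t - g_e) - \mathbb{E}[g_t - g_e]\|^2] + \|\mathbb{E}[g_t - g_e]\|^2 = \mathbb{E}[\|g_t - g_e\|^2]$; this is the nonnegativity that makes the quadratic convex and, under the hypothesis $\mathbb{E}[\|g_t - g_e\|^2] > 0$, strictly convex with a unique minimizer.

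Next I would read off $\alpha_{\mathrm{unc}}$ as $-\tfrac{1}{2}(\text{linear coefficient})/(\text{quadratic coefficient})$. The linear coefficient is $2\big(b_e^\top(b_t - b_e) + \operatorname{tr}(\operatorname{Cov}(g_e, g_t - g_e))\big)$; expanding, $b_e^\top(b_t - b_e) = b_t^\top b_e - \|b_e\|^2$ and $\operatorname{tr}(\operatorname{Cov}(g_e, g_t - g_e)) = \operatorname{tr}(\Sigma_{te}) - \operatorname{tr}(\Sigma_e)$ using $\operatorname{tr}(\operatorname{Cov}(g_e, g_t)) = \operatorname{tr}(\Sigma_{te})$ (trace is invariant under the transpose implicit in the $\operatorname{Cov}(\cdot,\cdot)$ convention). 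So the linear coefficient equals $2\big(b_t^\top b_e - \|b_e\|^2 + \operatorname{tr}(\Sigma_{te}) - \operatorname{tr}(\Sigma_e)\big)$, and negating and dividing by $2\,\mathbb{E}[\|g_t - g_e\|^2]$ yields precisely $\alpha_{\mathrm{unc}} = \big(\operatorname{tr}(\Sigma_e - \Sigma_{te}) + \|b_e\|^2 - b_t^\top b_e\big)/\mathbb{E}[\|g_t - g_e\|^2]$. For the degenerate case $\mathbb{E}[\|g_t - g_e\|^2] = 0$, the quadratic coefficient vanishes and, since a sum of two squares is zero, the linear coefficient vanishes too (one checks $b_t = b_e$ and $\operatorname{Cov}(g_t - g_e) = 0$ force the linear term to $0$), so $\operatorname{MSE}(g_c(\alpha))$ is constant in $\alpha$ and every $\alpha^\star \in [0,1]$ is optimal.

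Finally, for the domination claim: since $\operatorname{MSE}(g_c(\cdot))$ is convex on $[0,1]$, its minimum over the interval is attained at $\alpha^\star = \max(0,\min(1,\alpha_{\mathrm{unc}}))$ — if $\alpha_{\mathrm{unc}} \in [0,1]$ this is the global minimum, and if $\alpha_{\mathrm{unc}}$ lies outside, convexity forces the constrained minimum to the nearer endpoint. Either way $\operatorname{MSE}(g_c(\alpha^\star)) \le \operatorname{MSE}(g_c(\alpha))$ for all $\alpha \in [0,1]$, and in particular $\operatorname{MSE}(g_c(\alpha^\star)) \le \min\{\operatorname{MSE}(g_c(1)), \operatorname{MSE}(g_c(0))\} = \min\{\operatorname{MSE}(g_t), \operatorname{MSE}(g_e)\}$. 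The only mildly delicate point is bookkeeping the $\operatorname{Cov}(g_e, g_t - g_e)$ term and being consistent about the (possibly non-symmetric) convention for $\operatorname{Cov}(\cdot,\cdot)$ under the trace — but since only traces appear, $\operatorname{tr}(\operatorname{Cov}(g_e, g_t)) = \operatorname{tr}(\operatorname{Cov}(g_t, g_e)) = \operatorname{tr}(\Sigma_{te})$ and no asymmetry survives; the finite-second-moment hypothesis guarantees all these traces are finite so every manipulation is justified.
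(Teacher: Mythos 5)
Your proposal is correct and follows essentially the same route as the paper's proof: both decompose the MSE into squared bias plus scalar variance, express it as a convex quadratic in $\alpha$ whose leading coefficient is identified with $\mathbb{E}[\|g_t-g_e\|^2]$, read off the vertex, handle the degenerate constant case, and obtain the domination claim from convexity over $[0,1]$ with the endpoints being $g_e$ and $g_t$. The only difference is your reparametrization $g_c(\alpha)=g_e+\alpha(g_t-g_e)$, which streamlines the coefficient bookkeeping but does not change the argument.
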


The proof of this theorem is given in Appendix~\ref{proof:optimal_combination}. 
Theorem \ref{thm:optimal_combination} provides a principled method for finding the optimal estimator $g_c(\alpha^\star)$ among all possible convex combinations. The guarantee is powerful: our combined estimator can never underperform the best-performing individual estimator in terms of MSE. In fact, the improvement is typically strict, as formalized below.
\begin{corollary}[Strict Improvement Over $g_t$]
\label{cor:improve_gt}
Assume $\mathbb{E}[\|g_t - g_e\|^2] > 0$. If the optimal coefficient $\alpha^\star$ lies in the open interval $(0,1)$, then the combined estimator strictly dominates $g_t$:
\[
\operatorname{MSE}(g_c(\alpha^\star)) < \operatorname{MSE}(g_t).
\]
Consequently, unless the optimum lies at $\alpha^\star=1$ or $g_t\equiv g_e$, $g_c(\alpha^\star)$ yields a strict improvement upon $g_t$ in MSE.
\end{corollary}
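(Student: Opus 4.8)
\textbf{Proof plan for Corollary~\ref{cor:improve_gt}.}
The plan is to read off the claim from the structure already exposed in Theorem~\ref{thm:optimal_combination}: the MSE of $g_c(\alpha)$, as a function of $\alpha$, is a quadratic in $\alpha$ with leading coefficient $\mathbb{E}[\|g_t-g_e\|^2]$, which is strictly positive by hypothesis, so the map $\alpha\mapsto\operatorname{MSE}(g_c(\alpha))$ is a strictly convex parabola. Its unique global minimizer over $\mathbb{R}$ is $\alpha_{\mathrm{unc}}$, and the value at the endpoint $\alpha=1$ is exactly $\operatorname{MSE}(g_c(1))=\operatorname{MSE}(g_t)$. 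The first step is therefore to expand $\operatorname{MSE}(g_c(\alpha))=\mathbb{E}\|\alpha g_t+(1-\alpha)g_e-\mu\|^2$ and verify it is genuinely quadratic in $\alpha$ with positive leading term — this is the same computation underlying Theorem~\ref{thm:optimal_combination}, so I would simply cite that theorem's derivation rather than redo it.

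The second step is the elementary but decisive observation about strictly convex parabolas: if $f(\alpha)=c(\alpha-\alpha_{\mathrm{unc}})^2+f_{\min}$ with $c>0$, then $f$ is strictly decreasing on $(-\infty,\alpha_{\mathrm{unc}})$ and strictly increasing on $(\alpha_{\mathrm{unc}},\infty)$; in particular $f(\beta)<f(\gamma)$ whenever $\beta$ lies strictly between $\gamma$ and $\alpha_{\mathrm{unc}}$, or whenever $\beta=\alpha_{\mathrm{unc}}\neq\gamma$. Under the hypothesis $\alpha^\star\in(0,1)$, the projection in Theorem~\ref{thm:optimal_combination} is inactive, so $\alpha^\star=\alpha_{\mathrm{unc}}\in(0,1)$. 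Since $1\notin(0,1)$ we have $\alpha^\star\neq 1$, and hence
\[
\operatorname{MSE}(g_c(\alpha^\star))=f(\alpha_{\mathrm{unc}})=f_{\min}<f(1)=\operatorname{MSE}(g_t).
\]
This is the strict inequality claimed.

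For the final sentence of the corollary I would argue the contrapositive: the only way $g_c(\alpha^\star)$ fails to strictly improve on $g_t$ is if $f(\alpha^\star)=f(1)$, which by strict convexity forces $\alpha^\star=1$ (when $\mathbb{E}\|g_t-g_e\|^2>0$) or else the degenerate case $\mathbb{E}\|g_t-g_e\|^2=0$, i.e.\ $g_t\equiv g_e$ almost surely, in which case the parabola degenerates to a constant. Combining, strict improvement holds in all remaining cases, namely whenever $\alpha^\star\in[0,1)$ and $g_t\not\equiv g_e$; the case $\alpha^\star\in(0,1)$ is the one highlighted in the statement, and $\alpha^\star=0$ can be folded in by the same monotonicity argument since then $0=\alpha_{\mathrm{unc}}<1$.

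I do not anticipate a genuine obstacle here — the corollary is essentially a one-line consequence of strict convexity of the MSE parabola plus the identification $\alpha^\star=\alpha_{\mathrm{unc}}$ on the interior. The only point requiring a little care is the bookkeeping in the last sentence: being precise that ``the optimum lies at $\alpha^\star=1$'' and ``$g_t\equiv g_e$'' are exactly the two escape hatches, and that these are mutually compatible with but not implied by one another, so the cleanest exposition treats the degenerate case $\mathbb{E}\|g_t-g_e\|^2=0$ first and then invokes strict convexity on the nondegenerate case.
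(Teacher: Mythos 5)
Your argument is correct and is exactly the reasoning the paper relies on: the corollary is an immediate consequence of the strict convexity of the quadratic $\operatorname{MSE}(g_c(\alpha)) = A\alpha^2 - 2B\alpha + C$ with $A = \mathbb{E}[\|g_t-g_e\|^2] > 0$ established in the proof of Theorem~\ref{thm:optimal_combination}, together with $\alpha^\star = \alpha_{\mathrm{unc}}$ when the projection is inactive and $\operatorname{MSE}(g_c(1)) = \operatorname{MSE}(g_t)$. The paper gives no separate proof, and your write-up (including the careful handling of the two escape hatches in the final sentence) fills that in faithfully; the only nitpick is that $\alpha^\star = 0$ gives $\alpha_{\mathrm{unc}} \le 0$ rather than $\alpha_{\mathrm{unc}} = 0$, but your monotonicity argument covers that case regardless.
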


By symmetry, an analogous result holds when comparing against $g_e$. 
If $\mathbb{E}[\|g_t - g_e\|^2] > 0$ and the optimal coefficient $\alpha^\star$ lies in $(0,1)$, then
$\operatorname{MSE}(g_c(\alpha^\star)) < \operatorname{MSE}(g_e)$.
Thus, unless $\alpha^\star = 0$ or $g_t \equiv g_e$, the combined estimator $g_c(\alpha^\star)$ yields a strict improvement upon $g_e$ as well.

This statistical optimality of \Cref{thm:optimal_combination} and \Cref{cor:improve_gt} has direct algorithmic implications. Specifically, the property that the combined estimator $g_c(\alpha^\star)$ minimizes the mean squared error with respect to the true marginal gradient implies that it provides the most accurate gradient estimate on average, balancing variance and bias. In stochastic optimization, the quality of the gradient estimate at each iteration governs both the stability and speed of convergence. An estimator with lower MSE yields update directions that are more faithful to the true gradient, simultaneously reducing stochastic noise and systematic drift. With the MSE-optimal estimator $g_c(\alpha^\star)$, we therefore expect more stable 
optimization. The following section formalizes this intuition by analyzing the convergence bounds for SGD using our combined estimator.

\subsection{Convergence Guarantees for SGD}
\label{sec:convergence}

Having established that our estimator is statistically optimal in terms of MSE, we now connect this property to its algorithmic performance. In stochastic gradient descent (SGD), convergence is fundamentally limited by the quality of the gradient estimates. To formalize this, we present the following convergence \Cref{thm:biased_sgd_convergence_final}. The theorem and its proof are adapted from \citet{karimireddy2022federated}, which builds upon the well-established analysis for SGD with biased gradients \citep[e.g.,][]{ghadimi2013stochastic, ajalloeian2020biased}.
This theorem is pivotal: it reveals that the convergence bound is governed by the estimator's squared bias and variance. Since MSE is precisely the sum of these two error terms (see \Cref{decompose}), our approach of minimizing the MSE is explicitly designed to minimize the dominant factors that limit the algorithm's performance.

\begin{theorem}[SGD Convergence under BVPO Estimator]
\label{thm:biased_sgd_convergence_final}
Let $\mathcal{L}_m:\mathbb{R}^d\to\mathbb{R}$ be an $L$-smooth function with  minimum value of  $\mathcal{L}^*$. Consider stochastic gradient descent:
\(
\theta_{k+1} = \theta_k - \eta\, g_c(\theta_k),
\)
where $g_c(\theta_k)$ is the stochastic combined gradient estimator at iterate $\theta_k$.  Let \(\mu_k := \nabla_\theta \mathcal{L}_m(\theta_k)\) denote 
the true marginal gradient. Define the conditional expectation and  variance
\[
\mathbb{E}_k[\cdot]:= \mathbb{E}[\cdot \mid \theta_k], \qquad \mathrm{Var}_k(g_c) := \mathbb{E}_k\big[\|g_c(\theta_k) - \mathbb{E}_k[g_c(\theta_k)]\|^2\big],
\]
and the conditional bias vector
\(
\mathrm{Bias}_k := \mathbb{E}_k[g_c(\theta_k)] - \mu_k.
\)
If the constant step size satisfies $\eta \le 1/L$, then 
the averaged squared norm of the true gradient satisfies the exact bound: 
\begin{equation}
\mathbb{E}\!\left[\frac{1}{K}\sum_{k=0}^{K-1} \|\nabla_\theta \mathcal{L}_m(\theta_k)\|^2 \right] \le \frac{2}{K \eta}\big(\mathcal{L}_m(\theta_0) - \mathbb{E}[\mathcal{L}_m(\theta_K)]\big) + \frac{1}{K} \sum_{k=0}^{K-1} \mathbb{E}\big[\|\mathrm{Bias}_k\|^2 + \eta L \mathrm{Var}_k(g_c)\big].
\label{eq:exact-bound}
\end{equation}

Furthermore, if there exist uniform bounds
\[
\|\mathrm{Bias}_k\| \le B_c, \qquad \mathrm{Var}_k(g_c) \le \sigma_c^2, \quad \forall k,
\]
then, using $\mathbb{E}[\mathcal{L}_m(\theta_K)] \ge \mathcal{L}^*$, the bound simplifies to
\begin{equation}
\label{final-bound}
\mathbb{E}\!\left[\frac{1}{K}\sum_{k=0}^{K-1} \|\nabla_\theta \mathcal{L}_m(\theta_k)\|^2 \right]
\le \frac{2(\mathcal{L}_m(\theta_0) - \mathcal{L}^*)}{K \eta} + B_c^2 + \eta L\, \sigma_c^2.    
\end{equation}
\end{theorem}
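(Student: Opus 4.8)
The plan is to run the standard descent-lemma argument for SGD with biased stochastic gradients, routing the bias cross-term through the step-size condition $\eta\le 1/L$. First I would apply $L$-smoothness at consecutive iterates: since $\theta_{k+1}-\theta_k=-\eta\,g_c(\theta_k)$,
$$
\mathcal{L}_m(\theta_{k+1}) \le \mathcal{L}_m(\theta_k) - \eta\,\langle \mu_k,\, g_c(\theta_k)\rangle + \tfrac{L\eta^2}{2}\|g_c(\theta_k)\|^2 .
$$
Next I would take the conditional expectation $\mathbb{E}_k[\cdot]=\mathbb{E}[\cdot\mid\theta_k]$ and substitute the two identities $\mathbb{E}_k[g_c(\theta_k)]=\mu_k+\mathrm{Bias}_k$ and $\mathbb{E}_k[\|g_c(\theta_k)\|^2]=\|\mu_k+\mathrm{Bias}_k\|^2+\mathrm{Var}_k(g_c)$. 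Expanding $\|\mu_k+\mathrm{Bias}_k\|^2$ and collecting terms yields
$$
\mathbb{E}_k[\mathcal{L}_m(\theta_{k+1})] \le \mathcal{L}_m(\theta_k) - \eta\bigl(1-\tfrac{L\eta}{2}\bigr)\|\mu_k\|^2 - \eta(1-L\eta)\langle\mu_k,\mathrm{Bias}_k\rangle + \tfrac{L\eta^2}{2}\|\mathrm{Bias}_k\|^2 + \tfrac{L\eta^2}{2}\mathrm{Var}_k(g_c).
$$

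The crux is the cross-term $-\eta(1-L\eta)\langle\mu_k,\mathrm{Bias}_k\rangle$. Because $\eta\le 1/L$ gives $1-L\eta\ge 0$, Young's inequality $-\langle\mu_k,\mathrm{Bias}_k\rangle\le\tfrac12\|\mu_k\|^2+\tfrac12\|\mathrm{Bias}_k\|^2$ can be applied with a nonnegative prefactor. After substituting, the coefficient of $\|\mu_k\|^2$ collapses to exactly $-\eta/2$ and the coefficient of $\|\mathrm{Bias}_k\|^2$ to $\eta/2$, so that
$$
\tfrac{\eta}{2}\|\mu_k\|^2 \le \mathcal{L}_m(\theta_k) - \mathbb{E}_k[\mathcal{L}_m(\theta_{k+1})] + \tfrac{\eta}{2}\|\mathrm{Bias}_k\|^2 + \tfrac{L\eta^2}{2}\mathrm{Var}_k(g_c).
$$
Then I would take total expectations, sum over $k=0,\dots,K-1$, telescope the function-value differences via the tower property, and divide by $K\eta/2$; this gives exactly \eqref{eq:exact-bound}. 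For \eqref{final-bound} I would then insert the uniform bounds $\|\mathrm{Bias}_k\|\le B_c$ and $\mathrm{Var}_k(g_c)\le\sigma_c^2$ and replace $\mathbb{E}[\mathcal{L}_m(\theta_K)]$ by its lower bound $\mathcal{L}^*$.

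The main obstacle is disciplined bookkeeping of the bias cross-term: one must use precisely the symmetric $\tfrac12/\tfrac12$ split in Young's inequality so that the quadratic-in-$\mu_k$ coefficient lands at $-\eta/2$ and not at some step-size-dependent quantity, and the entire maneuver relies on $1-L\eta\ge 0$ — which is exactly where the hypothesis $\eta\le 1/L$ enters, since otherwise the inequality would point the wrong way. A secondary point of care is that $\mathbb{E}_k$ must average over \emph{all} the fresh randomness in $g_c(\theta_k)$ (the data draw and the trace sampling), with $\theta_k$ measurable, so that the telescoped sum of conditional one-step drops legitimately yields $\mathcal{L}_m(\theta_0)-\mathbb{E}[\mathcal{L}_m(\theta_K)]$.
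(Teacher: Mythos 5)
Your proposal is correct and follows essentially the same route as the paper: descent lemma, conditional expectation, the second-moment decomposition $\mathbb{E}_k[\|g_c\|^2]=\|\mathbb{E}_k[g_c]\|^2+\mathrm{Var}_k(g_c)$, telescoping, and the uniform bounds. Your Young's-inequality step on the cross-term, with prefactor $\eta(1-L\eta)\ge 0$, is algebraically identical to what the paper does via the polarization identity followed by dropping the nonpositive term $-\tfrac{\eta}{2}(1-\eta L)\|\mathbb{E}_k[g_c(\theta_k)]\|^2$ (the slack in your inequality is exactly $\tfrac{\eta(1-L\eta)}{2}\|\mu_k+\mathrm{Bias}_k\|^2$), so both arguments land on the same one-step bound and coefficients.
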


The proof of this theorem is given in  
\Cref{proof:biased_sgd_convergence_final}. 
\Cref{thm:biased_sgd_convergence_final} gives a standard convergence guarantee for SGD. In particular, the last two terms of \Cref{final-bound} define an \emph{error floor} determined by the squared bias and variance of the gradient estimator $g_c$.  
This means that, although SGD converges toward the optimum at the usual $\mathcal{O}(1/K)$ rate, its final accuracy is limited by the bias--variance tradeoff of $g_c$.  

To reduce this error floor, we consider an adaptive estimator  
\(
g_c(\alpha_k, \theta_k) = \alpha_k g_t(\theta_k) + (1-\alpha_k) g_e(\theta_k),
\)  
where the mixing weight $\alpha_k$ can be tuned at each iteration. Substituting this estimator into the general bound from \Cref{eq:exact-bound} gives  
\begin{align*}
\mathbb{E}\!\left[\frac{1}{K}\sum_{k=0}^{K-1}\|\nabla_\theta\mathcal{L}_m(\theta_k)\|^2\right]
&\le \frac{2}{K\eta}\!\big(\mathcal{L}_m(\theta_0)-\mathbb{E}[\mathcal{L}_m(\theta_K)]\big) \nonumber \\
&\quad + \frac{1}{K}\sum_{k=0}^{K-1}\mathbb{E}\!\big[\|\mathrm{Bias}_k(\alpha_k)\|^2 + \eta L\,\mathrm{Var}_k(g_c(\alpha_k))\big]. 
\end{align*}

The key observation is that  the per-step error contribution 
\[
\|\mathrm{Bias}_k(\alpha_k)\|^2 + \eta L\,\mathrm{Var}_k(g_c(\alpha_k))
\]  
can itself be minimized. In particular, when $\eta L=1$, the standard bias--variance decomposition,  
\begin{equation}
\label{decompose}
\MSE_k(g_c(\alpha, \theta_k) ) = \mathbb{E}_k\!\left[\| g_c(\alpha, \theta_k) - \mu_k \|^2\right]
= \|\mathrm{Bias}_k(g_c(\alpha))\|^2 + \mathrm{Var}_k(g_c(\alpha)),
\end{equation}  
shows that the optimal choice $\alpha_k^\star$ is exactly the one that minimizes the MSE, as shown in  Section~\ref{sec:mse}. Details of deriving \Cref{decompose} are given in Appendix~\ref{proof:decompose}.  

This establishes a direct link between statistical and algorithmic performance: the error floor in the SGD bound, $B_c^2 + \eta L\, \sigma_c^2$, is essentially the MSE, $B_c^2 + \sigma_c^2$, up to the factor $\eta L$, which reflects the algorithm's sensitivity to gradient noise. When $\eta L \approx 1$, minimizing MSE is therefore equivalent to minimizing the convergence error. The following theorem formalizes this intuition.

\begin{theorem}[Optimality of the MSE-Minimal Estimator for SGD]
\label{thm:optimal_sgd_convergence_revised}
Let the conditions of \Cref{thm:biased_sgd_convergence_final} hold. At each iteration $k$, the per-step error in the convergence bound is $E_k(\alpha) := \|\mathrm{Bias}_k(g_c(\alpha))\|^2 + \eta L\, \mathrm{Var}_k(g_c(\alpha))$. Let $\alpha_k^\star$ be the weight that minimizes the conditional Mean Squared Error  $\mathrm{MSE}_k(g_c(\alpha, \theta_k) )$.

If the learning rate and smoothness constant satisfy $\eta L = 1$, then the MSE-optimal weight $\alpha_k^\star$ also minimizes the per-step convergence error:
\(
E_k(\alpha_k^\star) \le E_k(\alpha) \quad \text{for all } \alpha \in [0,1].
\)
\end{theorem}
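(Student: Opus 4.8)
The plan is to observe that, once the step size is calibrated so that $\eta L = 1$, the entire content of the theorem is the exact conditional bias--variance decomposition of \Cref{decompose}: the per-step convergence error $E_k(\alpha)$ becomes, \emph{as a function of $\alpha$}, literally the conditional MSE of $g_c(\alpha,\theta_k)$, so a minimizer of one is a minimizer of the other. I would carry this out in three short steps, all at a fixed iterate $\theta_k$ and with every moment taken conditionally through $\mathbb{E}_k$.

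First, I would recall the conditional decomposition from \Cref{decompose}: for every $\alpha\in[0,1]$,
\[
\mathrm{MSE}_k(g_c(\alpha,\theta_k)) = \mathbb{E}_k\big[\|g_c(\alpha,\theta_k)-\mu_k\|^2\big] = \|\mathrm{Bias}_k(g_c(\alpha))\|^2 + \mathrm{Var}_k(g_c(\alpha)).
\]
Since $\mathbb{E}_k[g_c(\alpha,\theta_k)] = \alpha\,\mathbb{E}_k[g_t] + (1-\alpha)\,\mathbb{E}_k[g_e]$ is affine in $\alpha$ (so $\|\mathrm{Bias}_k(g_c(\alpha))\|^2$ is quadratic in $\alpha$) and $\mathrm{Var}_k(g_c(\alpha)) = \alpha^2\mathrm{Var}_k(g_t) + 2\alpha(1-\alpha)\,\mathrm{tr}(\mathrm{Cov}_k(g_t,g_e)) + (1-\alpha)^2\mathrm{Var}_k(g_e)$ is quadratic as well, the map $\alpha\mapsto\mathrm{MSE}_k(g_c(\alpha,\theta_k))$ is a quadratic on $[0,1]$; its minimizer exists and is the clip to $[0,1]$ of the unconstrained vertex, which is exactly the $\alpha_k^\star$ produced by \Cref{thm:optimal_combination} applied in its conditional form, i.e. with $\mu_k$ in place of $\mu$ and $\mathbb{E}_k$ in place of $\mathbb{E}$.

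Second, I would substitute the hypothesis $\eta L = 1$ into the definition of the per-step error,
\[
E_k(\alpha) = \|\mathrm{Bias}_k(g_c(\alpha))\|^2 + \eta L\,\mathrm{Var}_k(g_c(\alpha)) = \|\mathrm{Bias}_k(g_c(\alpha))\|^2 + \mathrm{Var}_k(g_c(\alpha)) = \mathrm{MSE}_k(g_c(\alpha,\theta_k)),
\]
where the last equality is \Cref{decompose}. Hence $E_k$ and $\alpha\mapsto\mathrm{MSE}_k(g_c(\alpha,\theta_k))$ are the identical function on $[0,1]$. Third, I would conclude: by definition $\alpha_k^\star$ minimizes $\mathrm{MSE}_k(g_c(\cdot,\theta_k))$ over $[0,1]$, and because this is the same function as $E_k$, the same $\alpha_k^\star$ satisfies $E_k(\alpha_k^\star)\le E_k(\alpha)$ for all $\alpha\in[0,1]$, which is the claimed statement.

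There is no genuine obstacle here beyond bookkeeping: the one point to be careful about is that every bias, variance, and covariance must be conditioned on $\theta_k$ (via $\mathbb{E}_k$), so that \Cref{decompose} and the MSE-minimization of \Cref{thm:optimal_combination} apply verbatim in their conditional forms at each iteration $k$. It is worth noting --- though not needed for the statement --- that for general $\eta L$ the error $E_k(\alpha)$ is still a quadratic in $\alpha$, namely the MSE with its variance term rescaled by $\eta L$, whose minimizer is the clip of the same vertex formula with $\Sigma_e,\Sigma_{te}$ replaced by $\eta L\,\Sigma_e,\eta L\,\Sigma_{te}$; the exact coincidence with the MSE-optimal weight is precisely the special feature of the calibration $\eta L = 1$.
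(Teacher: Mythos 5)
Your proposal is correct and follows essentially the same route as the paper's proof: substitute $\eta L = 1$ into $E_k(\alpha)$, identify the result with the conditional MSE via the decomposition in \Cref{decompose}, and conclude that the two functions coincide so they share minimizers. The extra detail you provide on the quadratic structure in $\alpha$ and the behavior for general $\eta L$ is sound but not needed for the statement.
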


The proof of this theorem is given in \Cref{proof:thm4}.
\Cref{thm:optimal_sgd_convergence_revised} makes explicit the link between statistical and algorithmic optimality,  and provides a simple but powerful conclusion: under a standard choice of learning rate, \textbf{the estimator that is statistically optimal (MSE-minimal) is precisely the one that is algorithmically optimal (minimizing the convergence error at each step).}

\begin{table*}[!t]
\centering
\small 
\vspace{-0.2cm}
\caption{Experiment results on alignment benchmarks: Arena-Hard~\citep{arenahard2024} and AlpacaEval 2~\citep{AlpacaEval}. LC Win Rate denotes length-controlled win rate.}

\resizebox{\textwidth}{!}{
\begin{tabular}{lcccccc}
\toprule
\multirow{3}{*}{\textbf{Method}} & \multicolumn{3}{c}{\textbf{\textit{Thinking}}} & \multicolumn{3}{c}{\textbf{\textit{NoThinking}}} \\ 
\cmidrule(lr){2-4}\cmidrule(lr){5-7}
& \multicolumn{1}{c}{\textbf{Arena-Hard}} & \multicolumn{2}{c}{\textbf{AlpacaEval 2}} & \multicolumn{1}{c}{\textbf{Arena-Hard}} & \multicolumn{2}{c}{\textbf{AlpacaEval 2}} \\
\cmidrule(lr){2-2}\cmidrule(lr){3-4}\cmidrule(lr){5-5}\cmidrule(lr){6-7}
& {\scriptsize \bf  Win Rate(\%)} & {\scriptsize \bf Win Rate(\%)} & {\scriptsize \bf LC Win Rate(\%)}  & {\scriptsize \bf Win Rate(\%)} & {\scriptsize \bf Win Rate(\%)} & {\scriptsize \bf LC Win Rate(\%)} \\
\midrule
R1-Qwen-7B & 16.3 & 15.7 & 18.4 & 16.7 &15.2 & 17.0  \\
SimPO & 19.0 & 17.8 & 20.2 & 17.2 & 19.1  & 20.3   \\
DPO & 19.1 & 18.3 & 20.4 & 17.7 & 19.3  & 20.7   \\
\midrule
BVPO & \textbf{24.2} & \textbf{26.1} & \textbf{25.5} & \textbf{24.5} & \textbf{25.2} & \textbf{25.2} \\
\midrule[.7pt]
\multirow{3}{*}{\textbf{Method}} & \multicolumn{3}{c}{\textbf{\textit{Thinking}}} & \multicolumn{3}{c}{\textbf{\textit{NoThinking}}} \\ 
\cmidrule(lr){2-4}\cmidrule(lr){5-7}
& \multicolumn{1}{c}{\textbf{Arena-Hard}} & \multicolumn{2}{c}{\textbf{AlpacaEval 2}} & \multicolumn{1}{c}{\textbf{Arena-Hard}} & \multicolumn{2}{c}{\textbf{AlpacaEval 2}} \\
\cmidrule(lr){2-2}\cmidrule(lr){3-4}\cmidrule(lr){5-5}\cmidrule(lr){6-7}
& {\scriptsize \bf  Win Rate(\%)} & {\scriptsize \bf Win Rate(\%)} & {\scriptsize \bf LC Win Rate(\%)}  & {\scriptsize \bf Win Rate(\%)} & {\scriptsize \bf Win Rate(\%)} & {\scriptsize \bf LC Win Rate(\%)} \\
\midrule
R1-Qwen-1.5B & 4.4 & 5.4 & 6.3 & 5.5 & 6.9 & 6.9  \\
SimPO & 5.5 & 6.2 & 8.4 & 4.5 & 4.6  & 3.8   \\
DPO & 5.1 & 6.4 & 8.0 & 7.2 & 7.8  & 7.1  \\
\midrule
BVPO & \textbf{8.7} & \textbf{8.6} & \textbf{9.4} & \textbf{8.0} & \textbf{10.6} & \textbf{10.3}  \\
\midrule[.7pt]
\multirow{3}{*}{\textbf{Method}} & \multicolumn{3}{c}{\textbf{\textit{Thinking}}} & \multicolumn{3}{c}{\textbf{\textit{NoThinking}}} \\ 
\cmidrule(lr){2-4}\cmidrule(lr){5-7}
& \multicolumn{1}{c}{\textbf{Arena-Hard}} & \multicolumn{2}{c}{\textbf{AlpacaEval 2}} & \multicolumn{1}{c}{\textbf{Arena-Hard}} & \multicolumn{2}{c}{\textbf{AlpacaEval 2}} \\
\cmidrule(lr){2-2}\cmidrule(lr){3-4}\cmidrule(lr){5-5}\cmidrule(lr){6-7}
& {\scriptsize \bf  Win Rate(\%)} & {\scriptsize \bf Win Rate(\%)} & {\scriptsize \bf LC Win Rate(\%)}  & {\scriptsize \bf Win Rate(\%)} & {\scriptsize \bf Win Rate(\%)} & {\scriptsize \bf LC Win Rate(\%)} \\
\midrule
R1-0528-Qwen3-8B & 65.4 & 48.7 & 39.6 & 65.2 &37.5 & 31.8 \\
SimPO & 69.2 & 49.1 & 44.9 & 62.1 & 41.2  & 41.5  \\
DPO & 68.7 & 48.9 & 44.3 & 61.6 & 40.3  & 40.0  \\
\midrule
BVPO & \textbf{71.5} & \textbf{50.6} & \textbf{45.9} & \textbf{66.8} & \textbf{46.6} & \textbf{48.4}  \\
\bottomrule
\end{tabular}
}
\label{tab:main_res}
\vspace{-0.2cm}
\end{table*}

\section{Experiments}
\label{sec:experiment}
In this section, we empirically validate Bias--Variance Optimized Preference Optimization (BVPO) on three large reasoning models, focusing on whether the combined gradient estimator $g_c$ improves alignment without degrading reasoning ability.

\subsection{Experiment Settings}
\label{sec:exp_setting}
\noindent \textbf{Models and Training Settings.} We conduct experiments on three LRMs: DeepSeek-R1-Distill-Qwen-7B, DeepSeek-R1-Distill-Qwen-1.5B, and DeepSeek-R1-0528-Qwen3-8B \citep{deepseekr1}. These models are trained with chain-of-thought reasoning data using SFT from Qwen 2.5 \citep{qwen2025qwen25technicalreport} and Qwen 3 \citep{yang2025qwen3technicalreport} base models, and have not been trained by RLHF. We use prompts from the UltraFeedback dataset \citep{cui2024ultrafeedbackboostinglanguagemodels} and let each model generate 5 responses with a temperature of 0.8. These responses are then ranked using the ArmoRM model \citep{ArmoRM}. The response score is calculated only using the final answer part of the response, following \citet{deepseekr1}. The highest and lowest-ranked responses are selected as the preferred and dispreferred samples, respectively. We use our $g_c$ with the DPO objective to implement our BVPO and compare its performance against the original base models and two state-of-the-art preference optimization methods: DPO \citep{rafailov2023direct} and SimPO \citep{meng2024simpo}. 

\noindent\textbf{Evaluation Benchmarks.} We evaluate alignment performance on two widely used open-ended instruction-following benchmarks: Arena-Hard \citep{arenahard2024} and AlpacaEval~2 \citep{AlpacaEval}, which measure response quality across diverse prompts. For Arena-Hard, we report the win rate against GPT-4-0314. For AlpacaEval~2, we report both the win rate and the length-controlled win rate against GPT-4 Turbo. 
We assess LRMs in two modes: the standard reasoning mode, denoted \textit{Thinking}; and suppressing reasoning trace generation by appending ``\texttt{<think>}\texttt{</think>}'' to the input prompt, denoted \textit{NoThinking}, reflecting scenarios in which users prefer instant responses without reasoning. To examine whether reasoning capabilities are preserved after alignment, we further evaluate on six widely used math-reasoning benchmarks: AIME~2024, AIME~2025, AMC~\citep{aimeamc}, Minerva~\citep{dataset_minerva}, OlympiadBench~\citep{dataset_olympiad}, and MATH-500~\citep{dataset_math}. We report avg@32 accuracy for AIME~2024, AIME~2025, and AMC due to their small test set sizes, and pass@1 for the remaining benchmarks. All evaluations use a temperature of 0.6. Additional experimental details are provided in \cref{sec:experiment details}.

\subsection{Main Results}
\label{sec:main_results}

\noindent\textbf{BVPO Consistently Improves Alignment.}
\cref{tab:main_res} reports alignment results on AlpacaEval~2 \citep{AlpacaEval} and Arena-Hard~\citep{arenahard2024}. Across both benchmarks, BVPO consistently surpasses the best baselines. In \textit{Thinking} mode, BVPO improves AlpacaEval~2 win rate by up to 7.8 points and the length-controlled win rate by up to 5.1 points, and increases the Arena-Hard win rate by up to 5.1 points. In \textit{NoThinking} mode, BVPO yields gains of up to 6.8 points on Arena-Hard and up to 5.9 win rate and 6.9 length-controlled win rate points on AlpacaEval~2. These results demonstrate BVPO's effectiveness by leveraging the bias-variance optimal gradient estimator.

\noindent \textbf{Preference Optimization Preserves and Improves Reasoning Ability.}
Because human preference alignment is typically the final tuning stage before deployment, it is crucial that preference optimization not erode LRMs' reasoning ability acquired from earlier reinforcement learning with verifiable rewards. As shown in \cref{tab:math_res}, evaluated on six widely adopted math reasoning benchmarks (AIME 2024, AIME 2025, AMC~\citep{aimeamc}, Minerva~\citep{dataset_minerva}, OlympiadBench~\citep{dataset_olympiad}, and MATH-500~\citep{dataset_math}), both DPO and BVPO maintain and improve LRMs' reasoning performance. Notably, BVPO achieves nontrivial gains of up to 4.0 average points across these benchmarks over the base model and, on average, exceeds DPO. These findings indicate that preference alignment using general conversational (non–math-specialized) training data does not sacrifice, and can in fact strengthen reasoning ability for LRMs.

\begin{table*}[!t]
\centering
\small 
\vspace{-0.2cm}
\caption{Experiment results on math reasoning benchmarks: AIME 2024, AIME 2025, AMC~\citep{aimeamc}, Minerva~\citep{dataset_minerva}, OlympiadBench~\citep{dataset_olympiad}, and MATH-500~\citep{dataset_math}.}

\resizebox{\textwidth}{!}{
\begin{tabular}{lccccccc}
\toprule
\textbf{Method} & \multicolumn{1}{c}{\textbf{AIME 24}} & \multicolumn{1}{c}{\textbf{AIME 25}} & \multicolumn{1}{c}{\textbf{AMC}}& \multicolumn{1}{c}{\textbf{MATH-500}}& \multicolumn{1}{c}{\textbf{Minerva}}& \multicolumn{1}{c}{\textbf{Olympiadbench}}& \multicolumn{1}{c}{\textbf{Avg.}}\\ 

\midrule
R1-Qwen-7B & 56.3 & 40.3 & 79.7 & 89.2 &40.1 & 57.5 &60.5 \\
SimPO & 55.8& 38.2 & 80.7 & 88.2 & 41.2  & 58.4&60.4   \\
DPO & 55.0& 40.7 & 80.8 & 89.8 & 40.8  & 59.0&61.0   \\
\midrule
BVPO & 58.4 & 41.0 & 81.2 & 89.4 & 43.0 & 60.9 & \textbf{62.3} \\
\midrule[.7pt]
\textbf{Method} & \multicolumn{1}{c}{\textbf{AIME 24}} & \multicolumn{1}{c}{\textbf{AIME 25}} & \multicolumn{1}{c}{\textbf{AMC}}& \multicolumn{1}{c}{\textbf{MATH-500}}& \multicolumn{1}{c}{\textbf{Minerva}}& \multicolumn{1}{c}{\textbf{Olympiadbench}}& \multicolumn{1}{c}{\textbf{Avg.}}\\ 

\midrule
R1-Qwen-1.5B & 28.6 & 21.7 & 62.2& 81.8 &29.0 & 44.9&44.7 \\
SimPO & 30.2 & 23.3 & 62.5 & 82.6 & 30.5 & 46.2& 45.9  \\
DPO & 31.7 & 23.4 & 64.9 & 84.0 & 33.8 & 48.7& 47.8  \\
\midrule
BVPO & 34.4 & 24.4 & 65.1 & 83.0 & 35.3 & 50.1 & \textbf{48.7} \\
\midrule[.7pt]
\textbf{Method} & \multicolumn{1}{c}{\textbf{AIME 24}} & \multicolumn{1}{c}{\textbf{AIME 25}} & \multicolumn{1}{c}{\textbf{AMC}}& \multicolumn{1}{c}{\textbf{MATH-500}}& \multicolumn{1}{c}{\textbf{Minerva}}& \multicolumn{1}{c}{\textbf{Olympiadbench}}& \multicolumn{1}{c}{\textbf{Avg.}}\\ 

\midrule
R1-0528-Qwen3-8B & 73.1& 66.0 & 91.8 & 96.4 &47.1 & 73.5 & 74.7  \\
SimPO & 73.9 & 66.1 & 91.0 & 96.4 & 47.5  & 76.0&75.2   \\
DPO & 73.6 & 65.9 & 91.0 & 97.6 & 47.1  & 76.0&75.2   \\
\midrule
BVPO & 76.3 & 68.0 & 91.7 & 96.8 & 46.7 & 76.9 & \textbf{76.1} \\
\bottomrule
\end{tabular}
}
\label{tab:math_res}
\vspace{-0.3cm}
\end{table*}

\section{Conclusion}
\label{sec:conclusion}

We have studied preference optimization for LRMs, where the statistically correct  marginal objective is intractable and practical single-trace surrogates suffer from high-variance gradients. We propose \textbf{BVPO}, which combines the standard trace-based gradient \(g_t\) with a low-variance empty-trace gradient \(g_e\) via convex combination: \(g_c = \alpha g_t + (1-\alpha) g_e\). Theoretically, we prove that \(g_c\) reduces variance induced from trace sampling, and that with the optimal \(\alpha\), its MSE never exceeds that of \(g_t\), yielding sharper SGD convergence under standard assumptions. Empirically, BVPO consistently improves alignment over DPO on AlpacaEval~2 and Arena-Hard, while also enhancing reasoning performance on math reasoning benchmarks. These results highlight trace sampling variance as a key bottleneck for LRM alignment and show that explicitly optimizing the bias–variance trade-off yields both stability and quality improvements.

\bibliography{iclr2026_conference}
\bibliographystyle{iclr2026_conference}

\appendix

\section{Proofs of Theoretical Results}

\subsection{Proof of Theorem \ref{thm:variance_reduction}}
\label{proof:variance_reduction}

\begin{proof}
We consider the conditional variance for fixed data sample $(x, y^\pm)$ and $(x, y'^\pm)$ . The estimator $g_e$ is deterministic with respect to the trace sampling distribution $p(r^\pm \mid x, y'^\pm)$. As the variance of a deterministic quantity is zero, $\mathrm{Var}_{r^\pm}(g_e \mid x, y'^\pm) = 0$.

The conditional variance of the combined estimator $g_c$ is derived as follows, using the property that for a random vector $X$ and constant vector $b$, $\mathrm{Var}(aX+b) = a^2\mathrm{Var}(X)$:
\begin{align*}
\mathrm{Var}_{r^\pm}\!\big(g_c \mid x, y^\pm, y'^\pm\big) 
&= \mathrm{Var}_{r^\pm}\!\big(\alpha g_t + (1-\alpha) g_e \,\big|\, x, y^\pm, y'^\pm\big) \\
&= \mathrm{Var}_{r^\pm}\!\big(\alpha g_t \mid x, y^\pm\big) 
   + \mathrm{Var}_{r^\pm}\!\big((1-\alpha) g_e \mid x, y'^\pm\big) \\
&\quad\; + 2\,\mathrm{tr}(\mathrm{Cov}_{r^\pm}\!\big(\alpha g_t, (1-\alpha) g_e \,\big|\, x, y^\pm, y'^\pm\big)) \\
&= \alpha^2 \,\mathrm{Var}_{r^\pm}(g_t \mid x, y^\pm) \\
&\quad\; + (1-\alpha)^2 \,\underbrace{\mathrm{Var}_{r^\pm}(g_e \mid x, y'^\pm)}_{=\,0} \\
&\quad\; + 2\alpha(1-\alpha)\,\underbrace{\mathrm{tr}(\mathrm{Cov}_{r^\pm}(g_t, g_e \mid x, y^\pm, y'^\pm))}_{=\,0} \\
&= \alpha^2 \,\mathrm{Var}_{r^\pm}(g_t \mid x, y^\pm).
\end{align*}
In the third equality of the above  equation, the variance and covariance with respect to $g_e$ is $0$ due to $g_e$ is independent of $r^\pm$ or constant under $r^\pm$.
Since $\alpha \in [0, 1]$, it follows that $\alpha^2 \le 1$. Because variance is non-negative, we have:
$$
\alpha^2 \mathrm{Var}_{r^\pm}(g_t \mid x, y^\pm) \le \mathrm{Var}_{r^\pm}(g_t \mid x, y^\pm).
$$
This directly implies the first inequality of this theorem:
$$
\mathrm{Var}_{r^\pm}(g_c \mid x, y^\pm, y'^\pm) \le \mathrm{Var}_{r^\pm}(g_t \mid x, y^\pm).
$$
Taking the expectation of this inequality with respect to the data distribution $p(x, y^\pm)$ yields the second inequality of this theorem, which completes the proof.
\end{proof}

\subsection{Proof of Theorem \ref{thm:optimal_combination}}
\label{proof:optimal_combination}

\begin{proof}
The proof strategy is to express the $\operatorname{MSE}$ as a convex quadratic function of $\alpha$ and find its minimum. 
The bias and scalar variance of the combined estimator $g_c(\alpha)$ are used to define the MSE. The bias vector is:
\begin{align}\label{bias1}
\mathrm{Bias}\big(g_c(\alpha)\big) 
&:= \mathbb{E}\big[g_c(\alpha)\big] - \mu, \nonumber \\
&= \mathbb{E}\big[ \alpha g_t + (1-\alpha) g_e \big] - \mu  \nonumber \\
&= \alpha \mathbb{E}[g_t] + (1-\alpha) \mathbb{E}[g_e] - \mu  \nonumber \\
&= \alpha\big( \mathbb{E}[g_t] - \mu \big) 
   + (1-\alpha)\big( \mathbb{E}[g_e] - \mu \big) \nonumber \\
&= \alpha\, b_t + (1-\alpha)\, b_e,
\end{align}
where $\mu := \nabla_\theta \mathcal{L}_m(\theta)$.
The scalar variance  is:
\begin{align}
\label{eq:trace-cov}
\mathrm{Var}(g_c(\alpha)) 
&:= \mathbb{E}\big[\|g_c(\alpha) - \mathbb{E}[g_c(\alpha)]\|^2\big] \notag\\
&= \mathbb{E}\big[\|\alpha (g_t - \mathbb{E}[g_t]) + (1-\alpha)(g_e - \mathbb{E}[g_e])\|^2\big] \notag\\
&= \alpha^2\,\mathbb{E}[\|g_t - \mathbb{E}[g_t]\|^2] 
  + (1-\alpha)^2\,\mathbb{E}[\|g_e - \mathbb{E}[g_e]\|^2]  + 2\alpha(1-\alpha)\,\mathrm{tr}(\mathrm{Cov}(g_t, g_e)) \notag\\
&= \alpha^2 \mathrm{Var}(g_t) + (1-\alpha)^2 \mathrm{Var}(g_e) + 2\alpha(1-\alpha) \mathrm{tr}(\Sigma_{te})  \notag\\
& = \alpha^2 \mathrm{tr}(\Sigma_t) 
  + (1 - \alpha)^2 \mathrm{tr}(\Sigma_e) 
  + 2\alpha(1-\alpha)\mathrm{tr}(\Sigma_{te}).
\end{align}

The MSE is the squared norm of the bias plus the trace of the variance:
\begin{align*}
\operatorname{MSE}(g_c(\alpha)) 
&= \mathbb{E}\big[\|g_c(\alpha) - \mu\|^2\big] \\[2pt]
&= \mathbb{E}\Big[\big\| \underbrace{g_c(\alpha)-\mathbb{E}[g_c(\alpha)]}_{:=\,\Delta}
                 + \underbrace{\mathbb{E}[g_c(\alpha)]-\mu}_{:=\,b}\big\|^2\Big] 
   \qquad (\text{add \& subtract } \mathbb{E}[g_c(\alpha)]) \\[2pt]
&= \mathbb{E}\big[\|\Delta\|^2\big] 
   + 2\,\mathbb{E}\big[\Delta^\top b\big]
   + \|b\|^2 
   \qquad (\text{expand } \|x+y\|^2) \\[2pt]
&= \mathbb{E}\big[\|\Delta\|^2\big] + \|b\|^2 
   \qquad (\mathbb{E}[\Delta]=0 \Rightarrow \mathbb{E}[\Delta^\top b]=b^\top\mathbb{E}[\Delta]=0) \\[2pt]
&= \|\,\mathbb{E}[g_c(\alpha)]-\mu\,\|^2 
   + \mathbb{E}\big[\|\Delta\|^2\big] 
   \qquad (b=\mathbb{E}[g_c(\alpha)]-\mu) \\[2pt]
&= \|\,\mathrm{Bias}(g_c(\alpha))\,\|^2 
   + \mathrm{Var}(g_c(\alpha))
   \qquad (\mathrm{Var}(g_c(\alpha))=\mathbb{E}[\|\Delta\|^2]).
\end{align*}
Then by \Cref{bias1,eq:trace-cov},
\begin{align*}
\operatorname{MSE}(g_c(\alpha)) 
&= \alpha^2 \|b_t\|^2 
  + (1 - \alpha)^2 \|b_e\|^2 
  + 2\alpha(1-\alpha)b_t^\top b_e \\
&\quad + \alpha^2 \mathrm{tr}(\Sigma_t) 
  + (1 - \alpha)^2 \mathrm{tr}(\Sigma_e) 
  + 2\alpha(1-\alpha)\mathrm{tr}(\Sigma_{te}) \\[2pt]
&= \alpha^2 \|b_t\|^2 
  + (1 - 2\alpha + \alpha^2) \|b_e\|^2 
  + 2\alpha b_t^\top b_e - 2\alpha^2 b_t^\top b_e \\
&\quad + \alpha^2 \mathrm{tr}(\Sigma_t) 
  + (1 - 2\alpha + \alpha^2) \mathrm{tr}(\Sigma_e) 
  + 2\alpha \mathrm{tr}(\Sigma_{te}) - 2\alpha^2 \mathrm{tr}(\Sigma_{te}) \\[2pt]
&= \alpha^2 \big[\|b_t\|^2 + \|b_e\|^2 - 2b_t^\top b_e 
                 + \mathrm{tr}(\Sigma_t) + \mathrm{tr}(\Sigma_e) - 2\mathrm{tr}(\Sigma_{te})\big] \\
&\quad + \alpha \big[-2\|b_e\|^2 + 2b_t^\top b_e 
                     - 2\mathrm{tr}(\Sigma_e) + 2\mathrm{tr}(\Sigma_{te})\big] \\
&\quad + \|b_e\|^2 + \mathrm{tr}(\Sigma_e).
\end{align*}

This expression is a quadratic function of $\alpha$, which can be written as 
$$\operatorname{MSE}(g_c(\alpha)) = A\alpha^2 - 2B\alpha + C.$$ 
By collecting the coefficients for the powers of $\alpha$, we find:
\begin{align*}
A &= \|b_t - b_e\|^2 + \mathrm{tr}(\Sigma_t + \Sigma_e - 2\Sigma_{te}) = \|b_t - b_e\|^2 + \Var(g_t-g_e) = \mathbb{E}[\|g_t-g_e\|^2], \\
B &= \|b_e\|^2 - b_t^\top b_e + \mathrm{tr}(\Sigma_e - \Sigma_{te}).
\end{align*}
Since $A\ge 0$, the MSE is therefore a convex parabola in $\alpha$. If $A>0$, the unconstrained minimizer is found by setting the derivative $d(\operatorname{MSE})/d\alpha = 2A\alpha - 2B$ to zero, which yields:
\[
\alpha_{\mathrm{unc}} = \frac{B}{A}.
\]
This matches the expression in the theorem. To ensure the solution lies in the valid interval, the optimal constrained parameter is $\alpha^\star = \max(0, \min(1, \alpha_{\mathrm{unc}}))$. If $A=0$, then $\mathbb{E}[\|g_t-g_e\|^2]=0$, implying $g_t=g_e$ almost surely, so $B=0$, $\operatorname{MSE}(g_c(\alpha))$ is a constant, and any $\alpha\in[0,1]$ is optimal. 

By the property of convex functions, the minimum value over a closed interval must be less than or equal to the value at the endpoints. Here, the endpoints correspond to the individual estimators: $\operatorname{MSE}(g_c(0)) = \operatorname{MSE}(g_e)$ and $\operatorname{MSE}(g_c(1)) = \operatorname{MSE}(g_t)$. Thus, it directly follows that:
\[
\operatorname{MSE}(g_c(\alpha^\star)) \le \min\bigl\{ \operatorname{MSE}(g_t), \operatorname{MSE}(g_e) \bigr\}.
\]
This completes the proof.
\end{proof}

\subsection{Proof of Theorem \ref{thm:biased_sgd_convergence_final}}
\label{proof:biased_sgd_convergence_final}

\begin{proof}
The proof follows the derivation in \citet[Appendix D.1]{karimireddy2022federated}, which applies the standard descent lemma and telescoping sum techniques \citep[see e.g.,][]{ghadimi2013stochastic, ajalloeian2020biased}.

\paragraph{One-step descent.} Since $\mathcal{L}_m$ has an $L$-Lipschitz continuous gradient, for any $\theta, \theta'\in\mathbb{R}^d$ we have
\begin{equation}
\mathcal{L}_m(\theta') \le \mathcal{L}_m(\theta) + \nabla_\theta \mathcal{L}_m(\theta)^\top (\theta' - \theta) + \frac{L}{2} \|\theta' - \theta\|^2.
\label{eq:descent}
\end{equation}
Applying $\theta' = \theta_{k+1} = \theta_k - \eta g_c(\theta_k)$ to \Cref{eq:descent}, we get:
\begin{equation}
\mathcal{L}_m(\theta_{k+1}) \le \mathcal{L}_m(\theta_k) - \eta \nabla_\theta \mathcal{L}_m(\theta_k)^\top g_c(\theta_k) + \frac{L \eta^2}{2} \|g_c(\theta_k)\|^2.
\label{eq:descent1}
\end{equation}

\paragraph{Take conditional expectation.} Define $\mathbb{E}_k[\cdot] = \mathbb{E}[\cdot \mid \theta_k]$. Taking expectation of \Cref{eq:descent1} conditioning on $\theta_k$ gives
\begin{equation}
\mathbb{E}_k[\mathcal{L}_m(\theta_{k+1})] \le \mathcal{L}_m(\theta_k) - \eta \nabla_\theta \mathcal{L}_m(\theta_k)^\top \mathbb{E}_k[g_c(\theta_k)] + \frac{L \eta^2}{2} \mathbb{E}_k[\|g_c(\theta_k)\|^2].
\label{eq:cond-exp-descent}
\end{equation}

\paragraph{Bias-variance decomposition.}
Let $a := \nabla_\theta \mathcal{L}_m(\theta_k)$, $b := \mathbb{E}_k[g_c(\theta_k)]$, and $\mathrm{Bias}_k := b - a$.
The inner product is 
\begin{equation}
a^\top b = \frac{1}{2}(\|a\|^2 + \|b\|^2 - \|\mathrm{Bias}_k\|^2).
\label{eq:inner}    
\end{equation}
Next, decompose \(g_c(\theta_k)\) around its conditional mean:
\begin{equation}
g_c(\theta_k) = b + \underbrace{g_c(\theta_k) - b}_{D}, \quad \mathbb{E}_k[D] = 0.
\label{eq:inner1}    
\end{equation}
Then since $\mathbb{E}_k[D]=0$, the squared norm in \Cref{eq:cond-exp-descent} satisfies
\begin{equation}
\mathbb{E}_k[\|g_c(\theta_k)\|^2] 
= \mathbb{E}_k[\|b + D\|^2] 
= \|b\|^2 + 2 b^\top \mathbb{E}_k[D] + \mathbb{E}_k[\|D\|^2] 
= \|b\|^2 + \mathrm{Var}_k(g_c),
\label{eq:cond-exp}
\end{equation}
where \(\mathrm{Var}_k(g_c)\) is the conditional scalar variance of $g_c$.

\paragraph{Substitute and simplify.}
By substituting \Cref{eq:inner,eq:inner1,eq:cond-exp} into \Cref{eq:cond-exp-descent}, we have:
\begin{align*}
\mathbb{E}_k[\mathcal{L}_m(\theta_{k+1})]
&\le \mathcal{L}_m(\theta_k) - \frac{\eta}{2} \big(\|a\|^2 + \|b\|^2 - \|\mathrm{Bias}_k\|^2\big) + \frac{L \eta^2}{2} (\|b\|^2 + \mathrm{Var}_k(g_c)) \\
&= \mathcal{L}_m(\theta_k) - \frac{\eta}{2} \|a\|^2 - \frac{\eta}{2} (1 - \eta L) \|b\|^2 + \frac{\eta}{2} \|\mathrm{Bias}_k\|^2 + \frac{L \eta^2}{2} \mathrm{Var}_k(g_c).
\end{align*}
Since $\eta \le 1/L$, we may drop the nonpositive term involving $(1 - \eta L)$ in the above equation to get a simpler upper bound:
\begin{equation}
\mathbb{E}_k[\mathcal{L}_m(\theta_{k+1})] \le \mathcal{L}_m(\theta_k) - \frac{\eta}{2} \|a\|^2 + \frac{\eta}{2} \|\mathrm{Bias}_k\|^2 + \frac{L \eta^2}{2} \mathrm{Var}_k(g_c).
\label{eq:cond-progress-bound}
\end{equation}

\paragraph{One-step gradient norm bound.}
Rearranging \Cref{eq:cond-progress-bound} yields:
\begin{equation}
\label{per-step}
\| \nabla_\theta \mathcal{L}_m(\theta_k)\|^2 \le \frac{2}{\eta} (\mathcal{L}_m(\theta_k) - \mathbb{E}_k[\mathcal{L}_m(\theta_{k+1})]) + \|\mathrm{Bias}_k\|^2 + \eta L \, \mathrm{Var}_k(g_c).
\end{equation}

\paragraph{Telescope over $K$ iterations.}
Taking the total expectation of \Cref{per-step}, summing over $k=0, \dots, K-1$, dividing by $K$, and applying the law of total expectation gives:
\begin{align}
\mathbb{E}\Big[\frac{1}{K} \sum_{k=0}^{K-1} \|\nabla_\theta \mathcal{L}_m(\theta_k)\|^2 \Big]
&\le \frac{2}{K \eta} \sum_{k=0}^{K-1} (\mathbb{E}[\mathcal{L}_m(\theta_k)] - \mathbb{E}[\mathcal{L}_m(\theta_{k+1})]) \nonumber \\
&\quad + \frac{1}{K} \sum_{k=0}^{K-1} \mathbb{E}[\|\mathrm{Bias}_k\|^2 + \eta L\, \mathrm{Var}_k(g_c)].
\label{eq:inequality}
\end{align}
Telescoping the first sum in the right-hand side of the above equation yields the exact bound in \Cref{eq:exact-bound}.

\paragraph{Apply uniform bounds.}
Finally, applying the uniform bounds $\|\mathrm{Bias}_k\| \le B_c$ and $\mathrm{Var}_k(g_c) \le \sigma_c^2$ to \Cref{eq:inequality}, and using $\mathbb{E}[\mathcal{L}_m(\theta_K)] \ge \mathcal{L}^*$, we arrive at the simplified bound of this theorem, completing the proof.
\end{proof}

\subsection{Proof of Conditional Bias-Variance Decomposition in \Cref{decompose} }
\label{proof:decompose}

\begin{proposition}[Conditional Bias-Variance Decomposition]
\label{prop:conditional_bias_variance}
Let $g_c(\alpha, \theta_k) \in \mathbb{R}^d$ be a stochastic estimator of the true marginal gradient 
\(\mu_k := \nabla_\theta \mathcal{L}_m(\theta_k)\) at iterate \(\theta_k\). 
Then the conditional mean squared error (MSE) of \(g_c(\alpha, \theta_k)\) can be decomposed into its 
squared conditional bias and conditional variance as:
\[
\MSE_k(g_c(\alpha, \theta_k) ) = \mathbb{E}_k\big[\| g_c(\alpha, \theta_k) - \mu_k \|^2 \big]
= \big\| \mathrm{Bias}_k(g_c(\alpha)) \big\|^2 + \mathrm{Var}_k(g_c(\alpha)),
\]
where
\begin{align*} 
& \mathrm{Bias}_k(g_c(\alpha)) := \mathbb{E}_k[g_c(\alpha, \theta_k)] - \mu_k, \\
& \mathrm{Var}_k(g_c(\alpha)) := \mathbb{E}_k\big[\| g_c(\alpha, \theta_k) - \mathbb{E}_k[g_c(\alpha, \theta_k)] \|^2\big].   
\end{align*}
\end{proposition}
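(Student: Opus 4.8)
The plan is to run the textbook bias--variance decomposition, but performed entirely inside the conditional expectation $\mathbb{E}_k[\cdot]=\mathbb{E}[\cdot\mid\theta_k]$, so that both $\mu_k=\nabla_\theta\mathcal{L}_m(\theta_k)$ and $\mathbb{E}_k[g_c(\alpha,\theta_k)]$ are $\theta_k$-measurable and hence act as fixed vectors under $\mathbb{E}_k$. Concretely, fix the iterate $\theta_k$ and the mixing weight $\alpha$, and assume $g_c(\alpha,\theta_k)$ has a finite conditional second moment so every expectation below is well-defined. First I would introduce the centered stochastic part $\Delta := g_c(\alpha,\theta_k) - \mathbb{E}_k[g_c(\alpha,\theta_k)]$, which satisfies $\mathbb{E}_k[\Delta]=0$, and the deterministic (given $\theta_k$) part $\mathrm{Bias}_k(g_c(\alpha)) = \mathbb{E}_k[g_c(\alpha,\theta_k)]-\mu_k$, so that the error decomposes additively as $g_c(\alpha,\theta_k)-\mu_k = \Delta + \mathrm{Bias}_k(g_c(\alpha))$.

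Next I would expand the squared norm, $\|\Delta+\mathrm{Bias}_k(g_c(\alpha))\|^2 = \|\Delta\|^2 + 2\,\Delta^\top \mathrm{Bias}_k(g_c(\alpha)) + \|\mathrm{Bias}_k(g_c(\alpha))\|^2$, and apply $\mathbb{E}_k[\cdot]$ term by term. The cross term vanishes because $\mathrm{Bias}_k(g_c(\alpha))$ is constant under $\mathbb{E}_k$, giving $\mathbb{E}_k[\Delta^\top \mathrm{Bias}_k(g_c(\alpha))] = \mathbb{E}_k[\Delta]^\top \mathrm{Bias}_k(g_c(\alpha)) = 0$; the last term is already deterministic, so $\mathbb{E}_k[\|\mathrm{Bias}_k(g_c(\alpha))\|^2] = \|\mathrm{Bias}_k(g_c(\alpha))\|^2$; and the first term is $\mathbb{E}_k[\|\Delta\|^2] = \mathrm{Var}_k(g_c(\alpha))$ directly by the definition of the conditional scalar variance. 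Collecting these three pieces yields $\MSE_k(g_c(\alpha,\theta_k)) = \|\mathrm{Bias}_k(g_c(\alpha))\|^2 + \mathrm{Var}_k(g_c(\alpha))$, as claimed.

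There is essentially no hard step here; this is the conditional analogue of the unconditional MSE computation already carried out in the proof of \Cref{thm:optimal_combination}, with $\mathbb{E}$ replaced by $\mathbb{E}_k$ and $\mu$ by $\mu_k$. The only points that need care are (i) explicitly invoking the finite conditional second moment so the expansion is integrable and the cross term is legitimately zero, and (ii) being explicit that conditioning on $\theta_k$ turns $\mu_k$ and $\mathbb{E}_k[g_c(\alpha,\theta_k)]$ into constants, which is precisely what licenses pulling $\mathrm{Bias}_k(g_c(\alpha))$ outside $\mathbb{E}_k[\cdot]$. Combined with \Cref{thm:optimal_combination}, which identifies $\alpha_k^\star$ as the minimizer of $\|\mathrm{Bias}_k(g_c(\alpha))\|^2 + \mathrm{Var}_k(g_c(\alpha))$, this decomposition is exactly what is needed to reduce the per-step SGD error $E_k(\alpha)$ to the conditional MSE when $\eta L = 1$, as used in \Cref{thm:optimal_sgd_convergence_revised}.
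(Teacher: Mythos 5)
Your proposal is correct and follows essentially the same route as the paper's proof: both center $g_c(\alpha,\theta_k)$ at its conditional mean (your $\Delta$ is the paper's $D_k$), expand the squared norm, and kill the cross term using $\mathbb{E}_k[\Delta]=0$ together with the $\theta_k$-measurability of the bias vector. The added remarks on integrability and on why conditioning makes $\mu_k$ and $\mathbb{E}_k[g_c]$ constants are sensible but do not change the argument.
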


\begin{proof}
Introduce the decomposition
\[
g_c(\alpha, \theta_k) - \mu_k 
= \big( g_c(\alpha, \theta_k) - \mathbb{E}_k[g_c(\alpha, \theta_k)] \big) 
+ \big( \mathbb{E}_k[g_c(\alpha, \theta_k)] - \mu_k \big) 
=D_k + \mathrm{Bias}_k(g_c(\alpha)),
\]
where $D_k= g_c(\alpha, \theta_k) - \mathbb{E}_k[g_c(\alpha, \theta_k)]$. 
Expanding the squared norm and taking the conditional expectation yields
\begin{align*}
\mathbb{E}_k\big[ \| g_c(\alpha, \theta_k) - \mu_k \|^2 \big]
&= \mathbb{E}_k \big[ \| D_k + \mathrm{Bias}_k(g_c(\alpha)) \|^2 \big] \\
&= \mathbb{E}_k[\|D_k\|^2] + \|\mathrm{Bias}_k(g_c(\alpha))\|^2 
+ 2\, \mathbb{E}_k[D_k]^\top \mathrm{Bias}_k(g_c(\alpha)).
\end{align*}

By definition, \(\mathbb{E}_k[D_k] = 0\), so the cross-term vanishes. Moreover,
\(\mathbb{E}_k[\|D_k\|^2] = \mathrm{Var}_k(g_c(\alpha))\) by the definition of conditional variance. 
Combining these results gives
\[
\mathbb{E}_k\big[\| g_c(\alpha, \theta_k) - \mu_k \|^2 \big]
= \|\mathrm{Bias}_k(g_c(\alpha))\|^2 + \mathrm{Var}_k(g_c(\alpha)),
\]
which completes the proof.
\end{proof}

\subsection{Proof of \Cref{thm:optimal_sgd_convergence_revised}}
\label{proof:thm4}
\begin{proof}
Under the condition $\eta L = 1$, the per-step convergence error simplifies to:
\[
E_k(\alpha) = \|\mathrm{Bias}_k(g_c(\alpha))\|^2 +  \mathrm{Var}_k(g_c(\alpha)),
\]
which is precisely the  conditional MSE in \Cref{decompose}. Since $E_k(\alpha) \equiv \mathrm{MSE}_k(\alpha)$, the minimizer of one is necessarily the minimizer of the other.
\end{proof}

\section{Analysis of LRM's Log-Probability and Sequence-Length Stochasticity}
\label{sec:logp_stats_appendix}

In this section, we empirically quantify the stochasticity introduced by sampling reasoning traces, compared with trace sampling disabled. The results support the claim that stochastic trace sampling increases gradient variance, motivating our BVPO.

\subsection{Setup}
For each question, we sample five responses under two settings: reasoning-trace sampling enabled (\textit{Thinking}) and disabled (\textit{NoThinking}). Across the five samples, we compute the mean and variance of the joint log-probability and the sequence length, as well as the negative log-likelihood (NLL). We then average these per-question statistics over all questions.

\subsection{Results}

\paragraph{Trace Sampling Increases Variance.}
In \cref{tab:between_dispersion}, we report the variance ratios of log-probability and sequence-length comparing \textit{Thinking} to \textit{NoThinking}. We can see that reasoning trace generation increases dispersion in both joint log-probabilities and output lengths. Relative to \textit{NoThinking}, \textit{Thinking}'s variance of joint log-probabilities rises by up to 10.17 times while the variance of length rises by up to 2.91 times.

\begin{table}[t]
\centering
\caption{Variance ratios of log-probability and sequence-length comparing \textit{Thinking} to \textit{NoThinking}.}
\label{tab:between_dispersion}
\small
\begin{tabular}{lcc}
\toprule
Model & Variance ratio (log $p$) & Variance ratio (length) \\
\midrule
R1-Qwen-7B       & 10.17 & 2.91 \\
R1-Qwen-1.5B     &  3.68 & 1.32 \\
R1-0528-Qwen3-8B &  1.23 & 1.11 \\
\bottomrule
\end{tabular}
\end{table}

\paragraph{Trace Sampling Increases Sequence Length and NLL.}

Table~\ref{tab:between_nll} complements variance with length ratio and token-level predictability. \textit{Thinking} yields substantially longer outputs by up to 3.17 times. Per-token NLL increases by up to  21.5\%. Because NLL is normalized by length, this worsening cannot be attributed solely to longer sequences; tokens generated with reasoning trace generation enabled are intrinsically harder to predict. In the preference optimization context, noisier tokens and longer trajectories compound to amplify gradient variability, reinforcing the need for an estimator that explicitly manages the bias–variance trade-off.

\begin{table}[t]
\centering
\caption{Mean length ratio and NLL comparing \textit{Thinking} to \textit{NoThinking}.}
\label{tab:between_nll}
\small
\begin{tabular}{lccccc}
\toprule
Model & Mean length ratio & NLL$_{\text{Think}}$ & NLL$_{\text{No}}$ & $\Delta$NLL & \% increase \\
\midrule
R1-Qwen-7B       & 3.17 & 0.384 & 0.316 & 0.068 & 21.5\% \\
R1-Qwen-1.5B     & 2.72 & 0.572 & 0.496 & 0.076 & 15.4\% \\
R1-0528-Qwen3-8B & 1.86 & 0.456 & 0.381 & 0.075 & 19.8\% \\
\bottomrule
\end{tabular}
\end{table}

\paragraph{Within-\textit{Thinking} Localization.}
In \Cref{tab:trace_decomp}, we additionally provide an analysis of stochasticity within the sampled \textit{Thinking} responses. Within \textit{Thinking}, the reasoning trace accounts for the majority of tokens (55--64\% by length), and its per-token NLL is 1.15--1.73 times higher than the final answer segment. This shows that the trace segment is both larger and less predictable, so fluctuations in $\log p(r,y\mid x)$ are predominantly trace-driven. These observations align with BVPO’s design choice to incorporate an empty-trace component: by construction it is agnostic to trace sampling, thereby reducing the conditional variance term that dominates in \textit{Thinking} mode and tightening the convergence floor in \Cref{final-bound}.

\begin{table}[t]
\centering
\caption{Within-\textit{Thinking} decomposition into trace vs.\ answer segments. NLL in nats/token.}
\label{tab:trace_decomp}
\small
\begin{tabular}{lcccc}
\toprule
Model & Trace token share & NLL$_{\text{trace}}$ & NLL$_{\text{answer}}$ & NLL ratio (trace/answer) \\
\midrule
R1-Qwen-7B       & 0.639 & 0.453 & 0.261 & 1.73 \\
R1-Qwen-1.5B     & 0.612 & 0.679 & 0.403 & 1.69 \\
R1-0528-Qwen3-8B & 0.549 & 0.484 & 0.422 & 1.15 \\
\bottomrule
\end{tabular}
\end{table}

These statistics provide strong empirical evidence for the stochasticity caused by reasoning trace sampling, highlighting the instability of the standard trace-based gradient estimator $g_t$ and motivating the need for our BVPO.

\section{Experiment Details}
\label{sec:experiment details}

\subsection{Hyperparameter Settings} \label{hs}
We use a consistent batch size of 128 and train all methods for 1 epoch in all settings. The AdamW optimizer \citep{adamw} is used. The max sequence length is set to  4096 and a cosine learning rate schedule with 10\% warm-up steps is used. $\alpha$ for BVPO is set as 0.5 in our experiment. The hyperparameters for each method are grid-searched and are shown in \cref{tab:hyperparams_dpo} for DPO, \cref{tab:hyperparams_simpo} for SimPO, and \cref{tab:hyperparams_ours} for our BVPO correspondingly. The training is conducted using 8 GPUs.

\begin{table*}[h!]
\caption{The hyperparameters of DPO for each training setting.}
\label{tab:hyperparams_dpo}
\centering
\small
\begin{tabular}{lcc}
\toprule 
\textbf{Setting} &  $\beta$  &learning rate\\ \midrule
\textbf{DeepSeek-R1-Distill-Qwen-7B} & 0.01  & 7e-7\\
\textbf{DeepSeek-R1-Distill-Qwen-1.5B} & 0.01  & 7e-7\\
\textbf{DeepSeek-R1-0528-Qwen3-8B } & 0.01 & 7e-7 \\
\bottomrule
\end{tabular}
\end{table*}

\begin{table*}[h!]
\caption{The hyperparameters of SimPO for each training setting.}
\label{tab:hyperparams_simpo}
\centering
\small
\begin{tabular}{lccc}
\toprule 
\textbf{Setting} &  $\beta$  &$\gamma$ & learning rate\\ \midrule
\textbf{DeepSeek-R1-Distill-Qwen-7B} & 2.5 & 1.0 & 7e-7\\
\textbf{DeepSeek-R1-Distill-Qwen-1.5B} & 2.5 & 1.0 & 7e-7\\
\textbf{DeepSeek-R1-0528-Qwen3-8B } & 2.5 &1.0& 7e-7 \\
\bottomrule
\end{tabular}
\end{table*}

\begin{table*}[h!]
\caption{The hyperparameters of BVPO for each training setting.}
\label{tab:hyperparams_ours}
\centering
\small
\begin{tabular}{lccc}
\toprule 
\textbf{Setting} &  $\beta$ &learning rate \\ \midrule
\textbf{DeepSeek-R1-Distill-Qwen-7B} & 0.01  & 7e-7\\
\textbf{DeepSeek-R1-Distill-Qwen-1.5B} & 0.01  & 7e-7\\
\textbf{DeepSeek-R1-0528-Qwen3-8B } & 0.1 & 7e-7 \\
\bottomrule
\end{tabular}
\end{table*}

\subsection{Evaluation Details}
\label{benchmark_detail}
For alignment benchmarks (AlpacaEval~2 and Arena-Hard), we set the maximum generation length to 8192 tokens. Following \citet{deepseekr1}, evaluation uses only the final answer part of each response. GPT-4o-2024-11-20 is used as the judge model.

For math reasoning benchmarks, we increase the maximum generation length to 32768 tokens to accommodate problems requiring extended reasoning and to ensure a sufficiently large context window.

\subsection{Data Generation}
\label{data_generation_detail}

For the trace-based set $\mathcal{D}_t$, we use the experimented models’ official chat template to sample responses, which allows free-form reasoning trace generation.

\begin{tcolorbox}[title={Template for sampling $\mathcal{D}_t$}]
{\textless}{\textbar}begin\_of\_sentence{\textbar}{\textgreater}{\textless}{\textbar}User{\textbar}{\textgreater}\verb|{QUESTION}|{\textless}{\textbar}Assistant{\textbar}{\textgreater}{\textless}think{\textgreater}
\end{tcolorbox}

For the empty-trace set $\mathcal{D}_e$, we explicitly disable reasoning trace sampling by additionally appending \texttt{</think>} at the beginning of the assistant turn, since the special token \texttt{</think>} denotes the end of reasoning trace generation. For the sampled responses, we prepend the special token \texttt{</think>} so that they remain consistent with the official chat template that generates reasoning traces.

\begin{tcolorbox}[title={Template for sampling $\mathcal{D}_e$}]
{\textless}{\textbar}begin\_of\_sentence{\textbar}{\textgreater}{\textless}{\textbar}User{\textbar}{\textgreater}\verb|{QUESTION}|{\textless}{\textbar}Assistant{\textbar}{\textgreater}{\textless}think{\textgreater}{\textless}/think{\textgreater}
\end{tcolorbox}

\section{LLM Usage Disclosure}
In preparing this manuscript, we employed a large language model (LLM) as a writing assistant. Its use was strictly limited to enhancing clarity, readability, and grammatical correctness. Concretely, the LLM was used for rephrasing sentences to improve flow, suggesting alternative phrasings for technical descriptions, and converting tables into \LaTeX~format. All core scientific ideas, theoretical derivations, experimental results, and conclusions were developed and written solely by the human authors. The authors carefully reviewed and edited all LLM-assisted outputs and bear full responsibility for the final content and its scientific accuracy.

\end{document}